\newcommand{\gril}{\textsc{Gril}}
\newcommand{\grild}{\textsc{D-Gril}}
\newcommand{\RKG}{\mathsf{rk}}
\newcommand{\RR}{\mathbb{R}}
\newcommand{\comp}[2]{\mathbf{#1} \leq \mathbf{#2}}
\newcommand{\cG}{\mathcal{G}}
\newcommand{\vp}{\mathbf{p}}
\newcommand{\cH}{\mathcal{H}}
\theoremstyle{plain}
\newtheorem{theorem}{Theorem}[section]
\newtheorem{proposition}[theorem]{Proposition}
\newtheorem{corollary}[theorem]{Corollary}
\theoremstyle{definition}
\newtheorem{definition}[theorem]{Definition}
\theoremstyle{definition}
\newtheorem{remark}[theorem]{Remark}
\theoremstyle{definition}
\definecolor{azure}{rgb}{0.0, 0.5, 1.0}
\newcommand{\cancel}[1]
\title{\textsc{D-Gril}: End-to-End Topological Learning with \texorpdfstring{$2$}{}-parameter Persistence}
\author{%
  Soham Mukherjee* \\
  Department of Computer Science \\
  Purdue University \\
  West Lafayette, IN \\
  \texttt{mukher26@purdue.edu}
  \And
  Shreyas N. Samaga* \\
  Department of Computer Science\\
  Purdue University\\
  West Lafayette, IN \\
  \texttt{ssamaga@purdue.edu} \\
  \And
   Cheng Xin \\
   Department of Computer Science \\
   Rutgers University \\
   New Brunswick, NJ \\
   \texttt{cx122@cs.rutgers.edu}
   \And
   Steve Oudot \\
   GeomeriX team \\
   Inria Saclay and \'Ecole polytechnique \\
   Palaiseau, France \\
   \texttt{steve.oudot@inria.fr}
   \And
   Tamal K. Dey \\
   Department of Computer Science \\
   Purdue University \\
   West Lafayette, IN \\
   \texttt{tamaldey@purdue.edu}
}
\begin{document}

\maketitle
\def\thefootnote{*}\footnotetext{Both are considered first authors 
}\def\thefootnote{\arabic{footnote}}
\begin{abstract}
End-to-end topological learning using $1$-parameter persistence is well known. We show that the framework can be enhanced using $2$-parameter persistence by adopting a recently introduced $2$-parameter persistence based vectorization technique called \gril{}. We establish a theoretical foundation of \emph{differentiating} \gril{} producing \grild{}. We show that \grild{} can be used to learn a bifiltration function on standard benchmark graph datasets. Further, we exhibit that this framework can be applied in the context of bio-activity prediction in drug discovery.   
\end{abstract}

\section{Introduction}
\label{sec:introduction}
In recent years, persistent homology, one of the flagship concepts of Topological Data Analysis (TDA), has found its use in many fields such as neuroscience, material science, sensor networks, shape recognition, gene expression data analysis and many more~\cite{donut,tda_guide}. The performance of machine learning models such as Graph Neural Networks (GNNs) can be enhanced by augmenting topological information captured by persistent homology~\citep{Hofer2017DeepLW,GNN_graph_topology19,PersLay,togl}. Classical persistent homology, also known as $1$-parameter persistence, captures the evolution of topological structures in a simplicial complex $K$ following a filter function $f \colon K \to \RR$. The evolution of topological structures, in this case, can be completely characterized and compactly represented as \emph{persistence diagrams} or equivalently \emph{barcodes}~\cite{computing_pers_hom, edelsbrunner2002topological}. These persistence diagrams or barcodes can be vectorized~\cite{bubenik2015perslandscapes,reininghaus2015stable,AdamPersImg,hofer2019learning,PersLay,PLLay} and used in machine learning pipelines. In most applications, the simplicial complex $K$ is given and the choice of the filter function $f$ depends on the application. Choosing an appropriate filter function can be challenging. To avoid this, in \cite{hofer2020graph}, the authors proposed an end-to-end learning framework to learn the filter function rather than relying on making the right choice. They showed that learning the filter function performs better than the standard choices of filter functions on many graph datasets.

To obtain richer topological information, we can consider an $\RR^n$-valued filter function instead of a scalar filter function. This leads to the structure of a multiparameter persistence module instead of $1$-parameter persistence module. Multiparameter persistence modules, unlike their $1$-parameter counterparts, can \emph{not} be classified completely using a discrete invariant~\cite{carlsson2009computing}. Consequently, vectorizing multiparameter persistence modules while retaining as much topological information as possible has become a challenging problem. There are many recent works in this direction which use incomplete invariants such as \emph{rank invariant}~\cite{carlsson2009computing} or equivalently \emph{fibered barcodes}~\cite{rivet}, multiparameter persistence images~\cite{Carriere_Multipers_Images}, multiparameter persistence landscapes~\cite{Multipers_landscapes}, multiparameter persistence kernel~\cite{Multipers_Kernel_Kerber}, vectorization of signed barcodes~\cite{vect_signed_barcodes_23}, topological fingerprinting for virtual screening using multidimensional persistence~\cite{Todd}, effective multidimensional persistence~\cite{emp} or \emph{generalized rank invariant}~\cite{gril23}. The authors, in all these works, show that $2$-parameter persistence methods perform better than $1$-parameter persistence methods on many graph and time-series datasets, suggesting that an $\RR^2$-valued filter function, indeed, captures richer topological information than a scalar filter function. However, in all these works, one needs to make a suitable choice for an $\RR^2$-valued filter function. Akin to the $1$-parameter setup, learning the filter function rather than choosing a filter function can prove to be more informative and beneficial for the task at hand.

In this paper, we propose an end-to-end learning framework using $2$-parameter persistent homology based on the vectorization \gril{} introduced by~\cite{gril23}. We show that \gril{} is piecewise affine and give an explicit formula for the differential of \gril. The results discussed in~\cite{davis_stochastic, carriere_stochastic} help us to show the convergence of stochastic sub-gradient descent on \gril{}. We use these results to build a differentiable topological layer \grild, consequently an end-to-end learning pipeline. We use \grild{} for bio-activity prediction and provide experimental results. One of the key advantages of using TDA in the context of bio-activity prediction is its ability to capture and analyze the shape and structural characteristics of molecules in a manner that traditional methods might overlook. Molecules possess intricate shapes and topological features that directly influence their interactions with biological targets. TDA can uncover these subtleties, allowing for a deeper understanding of structure-activity relationships. We also show that \grild{} can be used, more generally, for bifiltration learning on graphs and provide experimental results where we compare with existing multiparameter persistent homology methods on benchmark graph datasets.
\footnote{The complete code is available at \url{https://github.com/TDA-Jyamiti/d-gril}}

In a recent work, \cite{diff_signed_barcodes_24} present an end-to-end topological learning method by developing a general framework suitable for a class of invariants of multiparameter persistence. Our approach though specific to \gril{} is more direct, uses simpler arguments, and is faster for training.

\section{Overview}
\label{sec:overview}

\cancel{
In~\cite{gril23}, the authors introduced a 2-parameter persistence based vectorization called \gril. \gril{} (Generalized Rank Invariant Landscape) is based on an invariant of 2-parameter persistence modules (formally defined in section \ref{sec:background}) called Generalized Rank (formally defined in \ref{sec:background}). Due to a result in~\cite{DKM24}, generalized rank for 2-parameter persistence modules can be computed efficiently. \gril{} is defined in the same spirit as persistence landscapes~\cite{bubenik2015perslandscapes} where the traditional rank function is replaced by generalized rank for 2-parameter persistence modules. The authors consider special form of intervals in $\RR^2$ called $\ell$-worms and compute generalized ranks over these intervals. An $\ell$-worm is an interval parametrized by a center point $\vp$ and a width $d$. An interesting property of generalized ranks over these intervals is that the values of generalized rank monotonically decrease when the intervals get larger.
Thus, given a value of rank $k$, the worm is expanded until the value of generalized rank over the worm drops below $k$. The maximum value of the width, $d_{\vp}$, of the worm where the value of generalized rank is greater than or equal to $k$ is the value of \gril{} at that center point for a given rank value $k$ and the value of $\ell$ in $\ell$-worm. This value $d_{\vp}$ constitutes an entry in the \gril{} vector. The center points are sampled from a finite grid and each center point $\vp_i$ contributes a $d_{\vp_i}$ to the final \gril{} vector. This is a brief overview of \gril{} vectorization. We refer the reader to~\cite{gril23} for a detailed description and an algorithm to compute \gril{}. 
}

Our main construct is \gril{}, introduced in~\cite{gril23} for vectorizing a $2$-parameter persistence module. \gril{} is defined in the same spirit as persistence landscapes~\cite{bubenik2015perslandscapes}, introduced for $1$-parameter persistence. The landscape uses the rank function over the interval decomposition known for $1$-parameter persistence modules. To extend the concept to $2$-parameter persistence modules, one faces two main challenges: (i) in general, persistence modules beyond $1$-parameter may not
decompose into intervals, (ii) the usual rank function cannot be defined
for intervals that are not rectangular. \gril{} solves (i) by defining the landscape function over a fixed set of two-dimensional intervals called \emph{$\ell$-worms} instead of the support of the indecomposables of the modules, and solves (ii) by considering a generalized version of the rank function termed as \emph{generalized rank} (Definition \ref{def:gen_rank}). 
%
 An $\ell$-worm is an interval in $\mathbb{R}^2$ parameterized by a center point $\vp$, a length~$\ell$, and a width $d$. In practice, a discrete version of the $\ell$-worms is used (Definition~\ref{def:discrete-l-worm} and Figure~\ref{fig:constraining_coord}), which allows for the efficient computation of the generalized rank over them using zigzag persistence~\cite{DKM24}.

An interesting property of the generalized rank over intervals is that it monotonically decreases as the intervals get larger.
Thus, given fixed parameters $k>0$ and $\ell>0$, the width~$d$  of an $\ell$-worm can be increased until the generalized rank over the worm drops below $k$. The maximum value of the width thus obtained, $d_{\vp}$, is the value of \gril{} at the worm center point $\vp$ for $k$ and $\ell$. 
In practice, $s$ center points $\vp_1, \cdots, \vp_s$ are sampled from the plane, and each center point $\vp_i$ contributes an entry $d_{\vp_i}$ in the final \gril{} vector. We refer the reader to~\cite{gril23} for a detailed description and an algorithm to compute \gril{}. 

\gril{} yields a vector representation of the 2-parameter persistence module formed from a given simplicial complex and a bifiltration function (formally defined in section~\ref{sec:background}). However, there is no provision to learn this bifiltration function. 
Often, it is difficult to choose the right filtration function that elicits the relevant information present in the data. For $1$-parameter persistence, learnt filter functions have been shown to perform better than the filter functions popularly chosen in TDA. 
To learn a suitable bifiltration function, we build a differentiable layer called \grild{}. We show in section~\ref{sec:experiments} that the learnt bifiltration function performs better than some of the popular bifiltration functions known in TDA supporting the need for end-to-end learning.

Towards this goal, we study the differentiability of \gril{}, seeking a closed form equation for its differential wherever defined. This, in turn, enables us to differentiate through the \gril{} computation, which is necessary for the topological layer in our end-to-end learning framework. In this regard, we draw upon the results discussed in~\cite{carriere_stochastic, davis_stochastic} for the conditions required for convergence of stochastic sub-gradient descent. We show that our framework satisfies these conditions and, consequently, that stochastic sub-gradient descent converges almost surely.



\section{Background}
\label{sec:background}
We review and recall some definitions and results in this section. Section \ref{subsec:bg_mph} primarily consists of concepts in multiparameter persistent homology. Section \ref{subsec:stoc_subgrad} consists of concepts and results that establish the conditions required for the convergence of stochastic sub-gradient descent. Section \ref{subsec:bg_ominimal} contains some background which is required to prove that \gril{} satisfies these conditions.

\subsection{Multiparameter persistent homology}\label{subsec:bg_mph}
In this subsection, we briefly review the concepts in multiparameter persistent homology. We focus, primarily, on 2-parameter persistent homology. For detailed definitions, refer~\cite{edelsbrunner2010computational,dey_wang_2022_book}. We begin by recalling the definition of a simplicial complex. Given a finite vertex set $V$, a simplicial complex $K=K(V)$ defined on this vertex set is a collection of subsets of $V$ such that if a subset $\sigma\subseteq V$ is in $K$, then all proper subsets
$\tau\subset \sigma$ are also in $K$. Each element in $K$ with cardinality $k+1$ is called a $k$-simplex or simply
a simplex. A graph is a simplicial complex where $V$ is the vertex set of the graph and $K$ consists of edges and vertices of the graph. A \emph{filtration}
is a collection of simplicial complexes $\{K_\mathbf{x}\}_{\mathbf{x}\in \mathbb{R}}$ indexed by the reals, with the property that $K_\mathbf{x}\subseteq K_{\mathbf{y}}$ for each
$\mathbf{x}\leq \mathbf{y}$. Extending this definition to collections of complexes indexed by $\RR^2$ with the product partial order ($\mathbf{x} \leq \mathbf{y} \in \RR^2$ if $x_1 \leq y_1$ and $x_2 \leq y_2$), we get
a \emph{bifiltration}. 

\begin{definition}[Bifiltration]
    A bifiltration is a collection of simplicial complexes $\{K_\mathbf{x}\}_{\mathbf{x} \in \RR^2}$ where $K_{\mathbf{x}} \subseteq K_{\mathbf{y}}$ for all comparable $\comp{x}{y} \in \RR^2$.
\end{definition}

    Let $K$ be a simplicial complex and $f \colon K \to \RR^2$ be a map with the property that $f(\sigma) \leq f(\tau) \in \RR^2$ for all $\sigma \subseteq \tau$. For each $\mathbf{x} \in \RR^2$, let $K_\mathbf{x} \coloneqq \{ \sigma \in K \colon f(\sigma) \leq \mathbf{x} \}$. Clearly, $K_\mathbf{x} \subseteq K_{\mathbf{y}}$ for all comparable $\comp{x}{y} \in \RR^2$. The resulting bifiltration, denoted
    $\{K^f_{\mathbf{x}}\}_{\mathbf{x}\in \RR^2}$, is called the \emph{sub-level set} bifiltration of the \emph{bifiltration function} $f$.


A filtration in the $1$-parameter case induces a persistence module, obtained by considering the inclusion-induced linear maps between the vector spaces given by the homology groups of the simplicial complexes comprising the filtration. Similarly, we obtain a $2$-parameter persistence module from a bifiltration.

\begin{definition}[2-parameter persistence module]
    A 2-parameter persistence module is an assignment of finite-dimensional vector spaces $M_{\mathbf{x}}$ for each $\mathbf{x} \in \RR^2$, and of linear maps $M_{\comp{x}{y}} \colon M_\mathbf{x} \to M_\mathbf{y}$ for all comparable $\comp{x}{y} \in \RR^2$, with the properties that $M_{\comp{x}{x}} = \text{id}$ and $M_{\comp{y}{z}} \circ M_{\comp{x}{y}} = M_{\comp{x}{z}}$ for all $\comp{\comp{x}{y}}{z} \in \RR^2$.
\end{definition}
Readers familiar with category theory can recognize that a 2-parameter persistence module is a functor $M \colon \RR^2 \to \textbf{vec}_{\mathbb{F}}$ where the poset $\RR^2$ is regarded as a category and $\textbf{vec}_{\mathbb{F}}$ denotes the category of finite-dimensional vector spaces over a field $\mathbb{F}$.

    Given a bifiltration $\{K_\mathbf{x}\}_{\mathbf{x}\in \RR^2}$, we consider $H_i(K_\mathbf{x})$, the $i$th homology group of the simplicial complex $K_{\mathbf{x}}$ over a field $F$, say $\mathbb{Z}_2$. Then, for each inclusion $K_{\mathbf{x}} \subseteq K_{\mathbf{y}}$, we get an induced linear map $H_i(K_\mathbf{x}) \to H_i(K_\mathbf{y})$. By this assignment, we get a 2-parameter persistence module.

    Let $\{K^f_{\mathbf{x}}\}_{\mathbf{x} \in \RR^2}$ be a sub-level set bifiltration. Then the 2-parameter persistence module $M$ obtained by the homology assignment is called the 2-parameter persistence module induced by $f$, and we denote it as $M_f$.

As mentioned in Section~\ref{sec:overview}, \gril{} vectorization uses the concept of generalized rank introduced by~\cite{GenRankKim21}, which we define below formally.

\begin{definition}[Generalized Rank]\label{def:gen_rank}
    Let $M$ be a 2-parameter persistence module. The restriction of $M$ to an interval $I$ of $\RR^2$ (see Appendix \ref{app:diff_gril}), denoted by $M|_I$, is the diagram formed by the collection of vector spaces $M_\mathbf{x}$ for $\mathbf{x} \in I$ and linear maps $M_{\comp{x}{y}}$ for all comparable $\comp{x}{y} \in I$. Then, the generalized rank of $M$ over $I$ is defined as the rank of the canonical map from the limit $\varprojlim M|_I$ to colimit $\varinjlim M|_I$:
    \begin{equation*}
        \RKG^M(I) \coloneqq \text{rank}\left(\varprojlim M|_I \rightarrow     \varinjlim M|_I\right).
    \end{equation*}
\end{definition}
 We refer the reader to~\cite{Saunders_Maclane_Cat_Theory} for the definitions of limit, colimit, and the construction of the canonical limit-to-colimit map. Intuitively, generalized rank captures the number of independent topological features supported over the interval $I$. It can be computed efficiently for $2$-parameter persistence modules~\cite{DKM24} using zigzag persistence~\cite{dh22}.
\begin{remark}
    In the special case where $I$ is a rectangle, $\RKG^M(I)$ is the rank of the linear map $M_{\mathbf{u}\leq \mathbf{v}}$ from the lower left corner $\mathbf{u}$ of the rectangle to the upper right corner $\mathbf{v}$. 
\end{remark}

\subsection{Stochastic sub-gradient descent}
\label{subsec:stoc_subgrad}
In this subsection, we briefly review the result and the conditions under which stochastic sub-gradient descent  converges as shown in~\cite{davis_stochastic}. In ~\cite{carriere_stochastic}, authors use this result in the setting of $1$-parameter persistent homology and show that stochastic sub-gradient descent converges in that case. 

Given a loss function $\mathcal{L}$ with the objective of minimizing it, 
consider the differential inclusion 
\begin{equation*}
   \Dot{\mathbf{z}}(t) \in -\partial \mathcal{L}(\mathbf{z}(t)) \hspace{0.3cm} \text{ for almost every } t. 
\end{equation*}
 The solutions $\mathbf{z}(t)$ are the trajectories of the sub-gradient of $\mathcal{L}$ which can be approximated by the standard stochastic sub-gradient descent method given by: 
\begin{equation}
\label{eq:stoc_subgrad}
    \mathbf{x}_{k+1} = \mathbf{x}_k - \alpha_k(\mathbf{y}_k + \mathbf{\zeta}_k), \hspace{0.3cm} \mathbf{y}_k \in \partial \mathcal{L}(\mathbf{x}_k),
\end{equation}
where the sequence $(\alpha_k)_k$ is the learning rate and $(\zeta_k)_k$ is a sequence of random variables or ``noise". \cite{davis_stochastic}
establish that under mild technical assumptions (Assumption C in \cite{davis_stochastic}) on these two sequences, the stochastic sub-gradient method converges almost surely to critical points of $\mathcal{L}$ if $\mathcal{L}$ is locally Lipschitz and Whitney stratifiable.
\begin{proposition}[Corollary 5.9~\cite{davis_stochastic}]\label{prop:sgd_convergence}
    Let $f\colon \RR^d \to \RR$ be a locally Lipschitz function that is $C^d$-stratifiable. Consider the iterates $\{ \mathbf{x}_k\}_{k\geq 1}$ produced by the stochastic sub-gradient method (Eq~\eqref{eq:stoc_subgrad}) and suppose Assumption C of \cite{davis_stochastic} holds. Then, almost surely, every limit point of the iterates $\{\mathbf{x}_k \}_{k\geq 1}$ is critical for $f$ and the function values $\{f(\mathbf{x}_k) \}_{k\geq 1}$ converge. 
\end{proposition}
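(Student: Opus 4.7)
The statement is quoted verbatim from Davis et al., so the natural plan is to reproduce the skeleton of their argument, which combines the classical ODE (differential inclusion) method for stochastic approximation with a chain rule that is made available by Whitney $C^d$-stratifiability. I would split the work into three pieces: (i) turning the discrete iteration into an asymptotic pseudo-trajectory of the differential inclusion $\dot{\mathbf{z}}(t) \in -\partial f(\mathbf{z}(t))$; (ii) proving a stratified chain rule that makes $f$ a Lyapunov function along such trajectories; (iii) invoking a Bena\"im--Hofbauer--Sorin type convergence result to conclude.

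For step (i), I would begin by recalling the Clarke subdifferential $\partial f$ of a locally Lipschitz function and noting that $\partial f$ is upper semicontinuous with nonempty compact convex values, hence the differential inclusion admits solutions. Under Assumption C of Davis et al.---square-summable step sizes, bounded iterates, and a martingale noise condition---standard stochastic approximation theory (e.g. Bena\"im--Hofbauer--Sorin) shows that the linear interpolation of $\{\mathbf{x}_k\}$ is, almost surely, an asymptotic pseudo-trajectory of the flow set of $-\partial f$. This is essentially a Gr\"onwall argument combined with the strong law of large numbers for the noise increments $\zeta_k$.

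Step (ii) is the crucial place where stratifiability enters, and it is where I expect the main difficulty. For a generic locally Lipschitz $f$ the composition $t \mapsto f(\mathbf{z}(t))$ need not be monotone along solutions of the inclusion, so one cannot conclude anything from the ODE picture alone. The fix is to exploit a $C^d$ Whitney stratification $\mathbb{R}^d = \bigsqcup_\alpha S_\alpha$ adapted to $f$: on each stratum $S_\alpha$ the restriction $f|_{S_\alpha}$ is $C^d$-smooth, and the projection of any $\mathbf{v} \in \partial f(\mathbf{x})$ onto $T_\mathbf{x} S_\alpha$ equals $\nabla (f|_{S_\alpha})(\mathbf{x})$. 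Using Whitney's (a) condition together with an absolutely continuous trajectory $\mathbf{z}(t)$ whose velocity $\dot{\mathbf{z}}(t)$ is tangent to the stratum containing $\mathbf{z}(t)$ for a.e.\ $t$, one obtains the projection/chain rule $\tfrac{d}{dt} f(\mathbf{z}(t)) = \langle \mathbf{v}(t), \dot{\mathbf{z}}(t)\rangle$ for every selection $\mathbf{v}(t) \in \partial f(\mathbf{z}(t))$. Plugging in $\dot{\mathbf{z}}(t) = -\mathbf{v}(t)$ gives $\tfrac{d}{dt} f(\mathbf{z}(t)) = -\|\mathbf{v}(t)\|^2 \le 0$, so $f$ is a strict Lyapunov function off the critical set $\{\mathbf{x} : 0 \in \partial f(\mathbf{x})\}$.

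For step (iii) I would combine (i) and (ii) via the standard recipe: an asymptotic pseudo-trajectory of a differential inclusion that admits a continuous strict Lyapunov function has its limit set contained in the zero set of the Lyapunov decrease---here the critical set of $f$---and the Lyapunov values along the trajectory converge. Transferring back from the interpolated trajectory to the discrete iterates $\mathbf{x}_k$ gives both conclusions of the proposition almost surely. The main obstacle, as noted, is the stratified chain rule; once that is in hand everything else is bookkeeping from stochastic approximation. Since the result is cited rather than reproved in this paper, the only thing I would need to check for my own purposes is that the tangency of $\dot{\mathbf{z}}(t)$ to the stratum holds for the specific trajectories arising from the inclusion, which Davis et al.\ establish by a measurable selection argument on the strata.
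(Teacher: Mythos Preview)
The paper does not prove this proposition at all: it is quoted verbatim as Corollary~5.9 of \cite{davis_stochastic} and used as a black box, with no argument supplied. So there is no ``paper's own proof'' to compare your proposal against. Your sketch is a reasonable high-level outline of the Davis et al.\ argument (ODE/differential-inclusion method, stratified chain rule turning $f$ into a Lyapunov function, Bena\"im--Hofbauer--Sorin convergence), and you yourself note that the result is cited rather than reproved here; that observation is exactly right.
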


\subsection{o-minimal geometry}\label{subsec:bg_ominimal}
It is known that any \emph{definable} function in an \emph{o-minimal} structure admits a Whitney $C^p$ stratification for all $p \geq 1$ \cite{ominimal}. We recall the definitions of o-minimal structure and definable function here.

\begin{definition}[o-minimal structure]
    An o-minimal structure on $\RR$ is a collection $\{ S_n\}_{n\in \mathbb{N}}$ where each $S_n$ is a set of subsets of $\RR^n$ such that:
    \begin{enumerate}
        \item $S_1$ is exactly the collection of finite union of points and intervals.
        \item $S_n$ contains all the sets of the form $\{\mathbf{x} \in \RR^n \colon p(\mathbf{x}) = 0 \}$ where $p$ is a polynomial on $\RR^n$.
        \item $S_n$ is a Boolean sub-algebra of $\RR^n$ for all $n$.
        \item If $A \in S_n$ and $B \in S_m$, then $A \times B \in S_{n+m}$.
        \item If $\pi \colon \RR^{n+1} \to \RR^n$ is the canonical projection onto the first $n$-coordinates, then for $A \in S_{n+1}$, $\pi(A) \in S_n$.
    \end{enumerate}
\end{definition}

A subset $A \in S_n$ for some $n\in \mathbb{N}$ is known as a \emph{definable set} in the o-minimal structure. Let $A \in S_n$ be given. A function $f \colon A \to \RR^m$ is said to be \emph{definable} if the graph of the function in $\RR^{n+m}$ is a definable set.

We note that all semi-algebraic functions are definable. In fact, the author in, \cite{exp_definable}, shows that there exists an o-minimal structure that simultaneously contains all semi-algebraic sets and the graph of the exponential function. The authors in \cite{davis_stochastic} use this result to show that a neural network, being a composition of definable functions, is definable (Corollary 5.11 \cite{davis_stochastic}). For readers not familiar with the notion of definable functions, it is sufficient to know that a piecewise affine map is definable. 



\section{Differentiability of \gril}
\label{sec:diff_of_gril}
We begin this section by proving that \gril{} is a piecewise affine map in section \ref{subsec:gril_piecewise_affine}. \gril{} being piecewise affine leads us to the fact that stochastic sub-gradient converges almost surely on \gril{}, which is discussed in section \ref{subsec:sgd}. In section \ref{subsec:grad_gril}, we give an explicit formula for the differential of \gril{}. 
\subsection{\gril{} as a piecewise affine map}\label{subsec:gril_piecewise_affine}
We begin by recalling the definitions of $\ell$-worm and \gril.
\begin{definition}[discrete $\ell$-worm,~\cite{gril23}]
\label{def:discrete-l-worm}
    Let $\boxed{\mathbf{p}}_d\coloneqq\{\mathbf{w}: \|\mathbf{p}-\mathbf{w}\|_\infty\leq d\}$ be 
the $d$-square centered at $\mathbf{p} \in \RR^2$ with side $2d$.
Given $ \ell\geq 1 \text{ and } d > 0$, the \emph{$\ell$-worm}, $\boxed{\mathbf{p}}_d^\ell$, is defined as the union of all $d$-squares $\boxed{\mathbf{q}}_d$ centered at some point $\mathbf{q}$ on the off-diagonal line segment $\mathbf{p}\pm\alpha\cdot(1, -1)$ with $\alpha = j \cdot d$ where $j \in \{1, \hdots, \ell-1\}$.
\end{definition}
Refer to Figure \ref{fig:constraining_coord} for a $2$-worm.
\begin{definition}[\gril,~\cite{gril23}]
    For a 2-parameter persistence module $M$, the Generalized Rank Invariant Landscape (\gril) is a function $\lambda^M:\RR^2\times\mathbb{N}_+\times\mathbb{N}_+\rightarrow \RR$ defined as 
\begin{equation*}
\lambda^M(\mathbf{p}, k, \ell)\coloneqq \sup \left\{d\geq 0 \colon \RKG^M \left (\boxed{\mathbf{p}}^\ell_d \right)\geq k\right\}. 
\end{equation*}
\label{def:gril}
\end{definition}

Fixing the bifiltration function $f$ and $k,\ell$, the landscape
$\lambda^{M}$ provides a function $\lambda^{M_f}_{k,\ell}:\mathbb{R}^2\rightarrow \mathbb{R}$, $\mathbf{\mathbf{\vp}}\mapsto \lambda^{M_f}(\mathbf{\mathbf{\vp}}, k, \ell)$.
The \gril{} vector is the vector of values of $\lambda^{M_f}_{k,\ell}$ evaluated
at a set of chosen sample points $\{\vp_1,\ldots,\vp_s\}\subset \mathbb{R}^2$.



Let $K$ be a simplicial complex with $n$ simplices, labelled $\sigma_1, \sigma_2, \hdots, \sigma_n$. A bifiltration function $f \colon K \to \RR^2$ can be viewed as a vector $\mathbf{v}_f \in \RR^{2n}$ where, for $k=1,\ldots, n$, $\mathbf{v}_f[2k-1]=f_x(\sigma_k)$ and $\mathbf{v}_f[2k]=f_y(\sigma_k)$. Here, $f_x(\sigma)$ and $f_y(\sigma)$ denote the $x$- and $y$-coordinates of the vector $f(\sigma)$ respectively. Notice that the vectors in $\RR^{2n}$ that correspond to a valid bifiltration function form a convex cone. We work with this set of vectors in $\RR^{2n}$. In this setting, the authors in~\cite{gril23} show that \gril{} is Lipschitz continuous in the following sense.

\begin{proposition}\label{prop:gril_cont}
    Let $X$ be a discrete space with $\vert X \vert = n$. For fixed $k,\ell,\vp$, let $\Lambda_{k,\ell}^\mathbf{p} \colon \RR^{2n} \to \RR$ be the map 
    $\mathbf{v}_f \mapsto \lambda^{M_f}_{k, \ell}(\mathbf{p})$.
    Then, $\Lambda^\mathbf{p}_{k,\ell}$ is Lipschitz continuous.
\end{proposition}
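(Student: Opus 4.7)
The plan is to reduce Lipschitz continuity to a stability property of the generalized rank over $\ell$-worms under $\epsilon$-interleavings of $2$-parameter persistence modules. In outline, a perturbation of $\mathbf{v}_f$ by $\epsilon$ in the $\ell^\infty$ norm produces an $\epsilon$-interleaving of $M_f$ and $M_{f'}$, and an $\epsilon$-interleaving shifts the generalized rank over an $\ell$-worm by at most $\epsilon$ in the width parameter, which in turn controls how much the supremum defining $\lambda^{M_f}_{k,\ell}(\vp)$ can move. Since $\|\cdot\|_\infty \leq \|\cdot\|_2$ on $\RR^{2n}$, working in the $\ell^\infty$ norm already yields Lipschitz continuity with respect to any equivalent norm.

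First I would pass from the coordinatewise perturbation to an interleaving at the module level. If $\|\mathbf{v}_f - \mathbf{v}_{f'}\|_\infty \leq \epsilon$, then for every simplex $\sigma \in K$ we have $|f_x(\sigma) - f'_x(\sigma)| \leq \epsilon$ and $|f_y(\sigma) - f'_y(\sigma)| \leq \epsilon$, giving sublevel-set inclusions $K^f_{\mathbf{x}} \subseteq K^{f'}_{\mathbf{x} + (\epsilon,\epsilon)}$ and $K^{f'}_{\mathbf{x}} \subseteq K^f_{\mathbf{x} + (\epsilon,\epsilon)}$ for every $\mathbf{x} \in \RR^2$. Passing to homology and using functoriality yields natural maps $\phi_\mathbf{x} \colon (M_f)_\mathbf{x} \to (M_{f'})_{\mathbf{x} + (\epsilon,\epsilon)}$ and $\psi_\mathbf{x} \colon (M_{f'})_\mathbf{x} \to (M_f)_{\mathbf{x} + (\epsilon,\epsilon)}$ which, by a standard diagram chase on the underlying inclusions, satisfy the interleaving identities; so $M_f$ and $M_{f'}$ are $\epsilon$-interleaved.

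Next I would invoke the stability of the generalized rank over $\ell$-worms, essentially the soft-stability content of~\cite{gril23}. Given the $\epsilon$-interleaving, for any center $\vp$ and any width $d \geq \epsilon$, a diagram chase at the limit-to-colimit level factors the canonical map for $M_{f'}|_{\boxed{\vp}_{d-\epsilon}^\ell}$ through the canonical map for $M_f|_{\boxed{\vp}_d^\ell}$ via $\phi$ and $\psi$, yielding $\RKG^{M_{f'}}(\boxed{\vp}_{d-\epsilon}^\ell) \geq \RKG^{M_f}(\boxed{\vp}_d^\ell)$. Taking the supremum over $d$ in the definition of $\lambda$ gives $\lambda^{M_{f'}}_{k,\ell}(\vp) \geq \lambda^{M_f}_{k,\ell}(\vp) - \epsilon$, and the symmetric inequality produces $|\Lambda^\vp_{k,\ell}(\mathbf{v}_f) - \Lambda^\vp_{k,\ell}(\mathbf{v}_{f'})| \leq \epsilon$, so $\Lambda^\vp_{k,\ell}$ is $1$-Lipschitz in the $\ell^\infty$ norm and hence Lipschitz in any equivalent norm on $\RR^{2n}$.

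The main obstacle I anticipate is the geometric bookkeeping in the rank comparison: because an $\ell$-worm is a union of $d$-squares spaced along the anti-diagonal rather than a single rectangle, one must verify that shrinking every constituent $d$-square by $\epsilon$ on each side produces a $(d-\epsilon)$-square centered at the same point, and that the restricted modules over these nested worms sit compatibly with $\phi$ and $\psi$ so that the limit and colimit constructions transfer correctly. The edge case $d < \epsilon$ is handled separately by noting that then both $\lambda^{M_f}_{k,\ell}(\vp)$ and $\lambda^{M_{f'}}_{k,\ell}(\vp)$ are at most $\epsilon$, so the Lipschitz bound holds trivially.
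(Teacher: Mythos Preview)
The paper does not supply its own proof of this proposition: it is stated with the attribution ``the authors in~\cite{gril23} show that \gril{} is Lipschitz continuous in the following sense,'' and no argument is given either in the body or the appendix. So there is nothing in the present paper to compare your sketch against beyond that citation.

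Your outline is the standard stability route one would expect to find in~\cite{gril23}: pass from an $\ell^\infty$-perturbation of filtration values to an $\epsilon$-interleaving of the induced modules, then use monotonicity of the generalized rank under nesting of worms, together with the interleaving, to bound the shift in the landscape value. That is the right architecture, and you correctly flag the one nontrivial step. One point worth sharpening: in the discrete $\ell$-worm of Definition~\ref{def:discrete-l-worm} the centers of the constituent squares are at $\vp + j\,d\,(1,-1)$, so they move with $d$. Consequently the ``$\epsilon$-eroded'' worm obtained by shrinking each square in place is \emph{not} literally $\boxed{\vp}_{d-\epsilon}^\ell$, contrary to what your phrasing suggests. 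The factorization of the limit-to-colimit map therefore requires a slightly more careful containment statement relating $\boxed{\vp}_{d-\epsilon}^\ell$ to the eroded version of $\boxed{\vp}_d^\ell$, and depending on how that containment is set up the resulting Lipschitz constant may carry a factor depending on $\ell$ rather than being exactly~$1$. This does not affect the qualitative conclusion (Lipschitz continuity), only the constant, but it is exactly the ``geometric bookkeeping'' you anticipate and it deserves to be written out rather than asserted.
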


By Rademacher's theorem~\cite{federer2014geometric}, $\Lambda^\mathbf{p}_{k,\ell}$ is differentiable almost everywhere.

Let $\cG_{k,\ell} \colon \RR^{2n} \to \RR^s$ be the map defined as:
\begin{equation}
\label{eq:gril_vec}
    \cG_{k,\ell}(v_{f}) = \left [\Lambda_{k,\ell}^{\vp_1}(\mathbf{v}_f), \Lambda_{k,\ell}^{\vp_2}(\mathbf{v}_f), \hdots, \Lambda_{k,\ell}^{\vp_s}(\mathbf{v}_f) \right ]^{T}
\end{equation}
where $\{\vp_j\}_{j=1}^s$ are the $s$ sampled center points. We drop the $k, \ell$ and refer to $\cG_{k,\ell}$ as $\cG$ whenever $k, \ell$ are well understood. We note that $\cG$ is also a function of the center points $\vp_j$ for all $j$. We show that $\cG$ is piecewise affine.

For notational convenience, in what follows
we denote $f_x(\sigma)$ as $\sigma^x$ and $f_y(\sigma)$ as $\sigma^y$, and we call them the {\em simplex coordinates} of~$\sigma$. Similarly, we denote the $x$-coordinate and $y$-coordinate of $\vp$ as $\vp^x$ and $\vp^y$ respectively.

We observe that, if there are two simplices $\sigma_i, \sigma_j$ such that for some $\rho \in \mathbb{Z} \text{ and }0 \leq \rho \leq \ell$ and $a,b\in \{x,y\}$, one has
$|\sigma_i^a - \vp_t^a| = \rho \cdot |\sigma_j^b - \vp_t^b|$ 
then, say for $a=x$ and $b=y$, the point representing the vector $\mathbf{v}_f$ lies on a hyperplane in $\RR^{2n}$:
\begin{eqnarray*}
  \left\{ \mathbf{v} \in \RR^{2n}\colon \left |\mathbf{v}[2i-1] - \vp_t^x\right | = \rho \cdot \left |\mathbf{v}[2j] - \vp_t^y\right | \right \}.
\end{eqnarray*}
Corresponding to each such pair of simplex coordinates, we get one hyperplane. Here, in the example, we chose one simplex coordinate to be $x$-coordinate and the other to be $y$-coordinate. This, however, need not always be the case: we can have all four possible combinations of $x$- and $y$-coordinates, corresponding to each of which we get a hyperplane. The exact formula of all such hyperplanes is given in Appendix \ref{app:diff_gril}. Combining all these hyperplanes, we get an \emph{arrangement $\cH$ of hyperplanes} in $\RR^{2n}$~\cite{CGHandbook}. The arrangement $\cH$ partitions $\mathbb{R}^{2n}$ into \emph{relatively open} 
$r$-cells, $r \in \{0,\hdots, 2n\}$. We observe that this arrangement induces an affine stratification $\mathcal{S}_\cH$ (refer \cite{diff_calc_barcodes} for the formal definition) of $\RR^{2n}$ where the $r$-dimensional strata are precisely the $r$-cells. 

\begin{figure}
    \centering
    \includegraphics[scale=0.4]{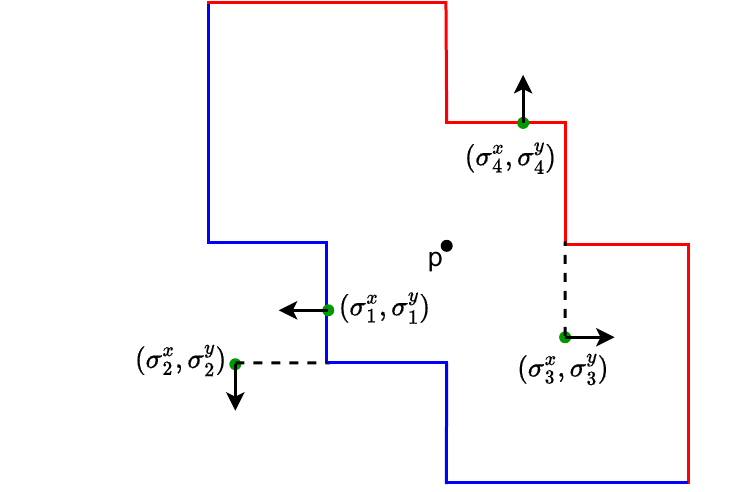}
    \caption{A 2-worm with lower boundary colored in blue and upper boundary colored in red. The figure also shows the possible cases of \textbf{constraining simplex coordinates}; $\sigma_1$ is a case of \emph{lower $x$-constraining} simplex coordinate because the $x$-coordinate $\sigma_1^x$ constrains the lower boundary of the worm and prevents the worm from expanding further to the left; $\sigma_2$ is an example of \emph{lower $y$-constraining} simplex coordinate because $\sigma_2^y$ also constrains the lower boundary of the worm and prevents the worm from expanding downwards. Similarly, $\sigma_3$ and $\sigma_4$ are \emph{upper $x$-constraining} and \emph{upper $y$-constraining} respectively. The arrows depict the gradient directions as described in Theorem \ref{thm:gril_diff}.}
    \label{fig:constraining_coord}
\end{figure}


In the following we prove that $\cG$ is a piecewise affine map relative to this arrangement, meaning that it is affine on each stratum of~$\mathcal{S}_\cH$. To this end, we introduce the notions of \emph{upper boundary}, \emph{lower boundary}, and \emph{constraining simplex coordinate} for an $\ell$-worm which will allow us to characterize the strata of $\mathcal{S}_\cH$ in terms of conditions on the bifiltration function. We give an intuitive explanation of these concepts using Figure \ref{fig:constraining_coord}. Formal definitions are given in Appendix \ref{app:diff_gril}. 

In Figure \ref{fig:constraining_coord}, a $2$-worm is shown with its lower boundary colored in blue and its upper boundary in red. As depicted, the \emph{lower boundary} consists of the part of the boundary that is in the lower half of the worm, 
while the \emph{upper boundary} consists of the rest of the boundary, i.e., a point belongs to the lower boundary (resp. upper boundary) if the open lower-set (resp. upper-set) of the point does not intersect the worm. 

For an intuitive understanding of \emph{constraining simplex coordinate}, consider the \gril{} value $d$ of an $\ell$-worm centered at $\vp$ for a given rank $k$. It follows from Definition~\ref{def:gril} that there exists at least one simplex $\sigma$ such that $\sigma^x$ or $\sigma^y$ prevents the worm from expanding any further to have a \gril{} value more than $d$ at $\vp$. The coordinate in question (either $\sigma^x$ or $\sigma^y$) is called a \emph{constraining simplex coordinate} for the $\ell$-worm centered at $\vp$. 
Here, by preventing the worm from expanding, we mean that if the worm were to expand then the value of the generalized rank over the worm would drop below the given value of rank $k$. Note that an $\ell$-worm can have multiple constraining simplex coordinates, including some coming from the same simplex~$\sigma$. 

We can characterize the top-dimensional ($2n$-dimensional) strata of $\mathcal{S}_\cH$ in terms of conditions on constraining simplex coordinates for $\ell$-worms, and consequently, in terms of conditions on the bifiltration functions.

\begin{proposition}\label{prop:unique_constraing_splx}
    The top-dimensional strata of $\mathcal{S}_\cH$ consist precisely of those bifiltration functions that have a unique constraining simplex coordinate for each $\ell$-worm.   
\end{proposition}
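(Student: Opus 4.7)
The plan is to translate the combinatorial condition ``unique constraining simplex coordinate for each $\ell$-worm'' into the geometric condition ``$\mathbf{v}_f$ lies off every hyperplane of $\cH$'' by spelling out the boundary structure of an $\ell$-worm. Specifically, I would first observe that the boundary of $\boxed{\vp_t}_d^\ell$ decomposes into axis-parallel segments lying at horizontal or vertical distances $m\cdot d$ from $\vp_t$ for certain integers $m\in\{0,1,\ldots,\ell\}$, with admissibility constraints encoding which portion of the upper or lower boundary each segment belongs to. A simplex coordinate $\sigma^a$ is therefore constraining at width $d$ (in the sense of Figure~\ref{fig:constraining_coord}) iff $|\sigma^a-\vp_t^a|=m\cdot d$ for some admissible~$m$.

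For the direction ``top-dimensional stratum $\Rightarrow$ uniqueness'', I would argue the contrapositive. Suppose some $\ell$-worm centered at a sample point $\vp_t$ admits two distinct constraining coordinates $\sigma_i^a$ and $\sigma_j^b$ at the critical width $d_{\vp_t}$. Writing $|\sigma_i^a-\vp_t^a|=m_1\,d_{\vp_t}$ and $|\sigma_j^b-\vp_t^b|=m_2\,d_{\vp_t}$ and eliminating $d_{\vp_t}$ yields the linear relation $m_2\,|\sigma_i^a-\vp_t^a|=m_1\,|\sigma_j^b-\vp_t^b|$, which is, by construction, precisely the defining equation of a hyperplane of $\cH$ (after rescaling, the shape $|\cdot|=\rho|\cdot|$ displayed in the text). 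Hence $\mathbf{v}_f\in\cup\,\cH$, contradicting membership in a top-dimensional stratum.

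For the direction ``uniqueness $\Rightarrow$ top-dimensional stratum'', I would again argue contrapositively: if $\mathbf{v}_f$ lies on some hyperplane of $\cH$, the hyperplane records two simplex coordinates $\sigma_i^a$ and $\sigma_j^b$ sitting at admissible boundary-incidence positions of some $\ell$-worm at a common width $d^*>0$. I would then identify $d^*$ with the critical value $d_{\vp_t}$: simultaneous boundary incidence at $d^*$ forces the generalized rank to drop strictly as $d$ grows past $d^*$ (giving $d_{\vp_t}\leq d^*$), while the same incidence shows the rank at $d^*$ is still $\geq k$ (giving $d_{\vp_t}\geq d^*$). Thus both coordinates are constraining and uniqueness fails.

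The hard part will be the forward direction, specifically ensuring that every hyperplane of $\cH$ corresponds to an \emph{actually realized} double constraint on some $\ell$-worm, rather than a spurious algebraic coincidence between two coordinates that never sit on the worm boundary simultaneously. This is handled by the admissibility conditions on the integer pairs $(m_1,m_2)$ baked into the appendix's hyperplane formulas, which restrict $\cH$ to precisely the coincidences realizable by the $\ell$-worm geometry. Once this admissibility bookkeeping is carried out, the hyperplanes of $\cH$ are in bijection with the potential double-constraint configurations, and the proposition follows by combining the two contrapositive arguments above.
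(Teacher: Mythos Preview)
The paper states Proposition~\ref{prop:unique_constraing_splx} without proof, so there is no argument to compare against directly; I assess your proposal on its own merits.

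Your contrapositive for ``top-dimensional $\Rightarrow$ unique'' is sound: two constraining coordinates at the critical width $d_{\vp_t}$ satisfy $|\sigma_i^a-\vp_t^a|=m_1 d_{\vp_t}$ and $|\sigma_j^b-\vp_t^b|=m_2 d_{\vp_t}$, and eliminating $d_{\vp_t}$ lands $\mathbf{v}_f$ on a hyperplane of~$\cH$ (at least for the small values of $\ell$ used in the paper; for larger $\ell$ one should check that $m_1/m_2$ or its reciprocal actually matches one of the integer multipliers listed in the appendix formulas).

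The genuine gap is in the other direction. Lying on a hyperplane of $\cH$ only says that two \emph{particular} simplex coordinates and a \emph{particular} center $\vp_t$ satisfy a linear relation, which singles out some width~$d^*$. Nothing ties $d^*$ to the \gril{} value $d_{\vp_t}=\Lambda_{k,\ell}^{\vp_t}(\mathbf{v}_f)$: the latter is a supremum governed by the generalized rank of the entire module $M_f$, not by the positions of $\sigma_i$ and $\sigma_j$ alone. Your claim that ``simultaneous boundary incidence at $d^*$ forces the generalized rank to drop strictly as $d$ grows past $d^*$'' is unsupported---two simplices touching the worm boundary at width $d^*$ need not cause any rank change whatsoever (they may be homologically irrelevant, or dominated by other simplices). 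It is perfectly possible that $\mathbf{v}_f$ sits on a hyperplane of $\cH$ while the actual constraining coordinate at $\vp_t$ is a third, unique, coordinate $\sigma_r^c$ at an unrelated width. The ``admissibility bookkeeping'' you invoke certifies only that the coincidence has the right \emph{shape} to be a double constraint, not that it \emph{is} one for the given~$f$ and~$k$.

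Only the implication ``top-dimensional $\Rightarrow$ unique'' is used downstream (in the proof of Theorem~\ref{thm:gril_affine}), and your first contrapositive already delivers that. The reverse inclusion, as stated, would require either a much finer arrangement (one that depends on $k$ and on the module structure, not just on simplex coordinates and centers) or a weakening of the ``precisely''.
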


We state the main theorem of this subsection.
\begin{theorem}\label{thm:gril_affine}
    Let $K$ be a simplicial complex with $n$ simplices. Let $k, \ell \in \mathbb{N}$, and let $\{\mathbf{p}_j\}_{j=1}^s$ be the $s$ sampled center points for the $\ell$-worms. 
    Then, $\cG$, as defined in Eq.\eqref{eq:gril_vec}, is a piecewise affine map relative to the arrangement~$\cH$.
\end{theorem}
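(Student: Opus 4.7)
The plan is to leverage Proposition~\ref{prop:unique_constraing_splx} to reduce the claim to affine behaviour on each top-dimensional stratum, and then extend to the lower-dimensional strata using the Lipschitz continuity from Proposition~\ref{prop:gril_cont}. Concretely, I would first fix a top-dimensional stratum $S$ of $\mathcal{S}_\cH$ and a sample point $\mathbf{p}_j$, and show that the constraining simplex coordinate for the $\ell$-worm centered at $\mathbf{p}_j$ is constant across $S$. Any two $\mathbf{v}_f, \mathbf{v}_f' \in S$ are connected by a path in $S$; a change in the identity (or the offset) of the constraining simplex coordinate along this path would force a coincidence of the form $|\sigma_i^a - \mathbf{p}_j^{b}| = \rho\,|\sigma_{i'}^{a'} - \mathbf{p}_j^{b'}|$, which is precisely the defining equation of a hyperplane in $\cH$, contradicting that $S$ lies in a single open cell of the arrangement.

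Once the constraining simplex coordinate is pinned down---say $\sigma_i^a$ at offset $\rho$ along the anti-diagonal of the worm, on either the upper or the lower boundary---the boundary condition defining $d_{\mathbf{p}_j}$ reads $\mathbf{v}_f[2i-\delta_a] - \mathbf{p}_j^{b} = \pm\,\rho\cdot d_{\mathbf{p}_j}$, where $\delta_a \in \{0,1\}$ selects the $x$- or $y$-slot of $\sigma_i$ and the sign together with the integer $\rho \in \{0,\ldots,\ell\}$ are determined by the type of constraining (upper vs.\ lower, $x$ vs.\ $y$) and by the geometry of the worm. Since the sign and the type are themselves locally constant on $S$, solving for $d_{\mathbf{p}_j}$ yields an explicit affine formula $\Lambda^{\mathbf{p}_j}_{k,\ell}(\mathbf{v}_f) = \pm\tfrac{1}{\rho}\bigl(\mathbf{v}_f[2i-\delta_a] - \mathbf{p}_j^{b}\bigr)$ that depends affinely on a single entry of $\mathbf{v}_f$ (the remaining entries drop out). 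Stacking over $j=1,\ldots,s$ shows that $\cG$ is affine on $S$.

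To handle lower-dimensional strata, I would use that every such stratum lies in the closure of at least one top-dimensional stratum, together with the global Lipschitz continuity of $\cG$ from Proposition~\ref{prop:gril_cont}: the affine formulas coming from the adjacent top-dimensional strata must agree along the common boundary, giving a well-defined affine extension onto each lower-dimensional stratum. I expect the main obstacle to be the constancy step in the first paragraph---one must carefully enumerate all the ways a constraining simplex coordinate can change as $\mathbf{v}_f$ varies across $\RR^{2n}$, verify that each such transition places $\mathbf{v}_f$ on a hyperplane of $\cH$, and invoke the uniqueness from Proposition~\ref{prop:unique_constraing_splx} to exclude jumps that do not cross a hyperplane. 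The explicit affine formula and the Lipschitz extension are then comparatively routine.
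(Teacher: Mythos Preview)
Your proposal is correct and follows the same strategy as the paper: use the uniqueness of the constraining simplex coordinate on each top-dimensional cell (Proposition~\ref{prop:unique_constraing_splx}) to obtain an affine expression for each $\Lambda^{\mathbf p_j}_{k,\ell}$, then invoke the continuity of Proposition~\ref{prop:gril_cont} together with the affineness of the strata to extend to the lower-dimensional strata. The only cosmetic difference is that the paper checks affineness on a top cell via an additivity identity $\cG(\mathbf v_f)+\cG(\mathbf v_{f'})=\text{const}+\cG(\mathbf v_f+\mathbf v_{f'})$ rather than by solving explicitly for $d_{\mathbf p_j}$; in your explicit formula just be careful that $\rho$ must be nonzero and that the axis of $\mathbf p_j$ should match the axis $a$ of the constraining simplex coordinate.
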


\textit{Overview of the proof:} $\cG$ depends affinely on the simplex coordinates in each top-dimensional stratum, because there is a unique constraining simplex coordinate for each $\ell$-worm. By continuity of $\cG$ (Proposition \ref{prop:gril_cont}), the restriction of $\cG$ to each (affine) lower dimensional stratum is affine. 

\subsection{Stochastic sub-gradient descent}\label{subsec:sgd}
In our machine learning pipeline, the \gril{} map $\cG$ is post-composed with a loss function $N\colon \RR^s\to\RR$, derived e.g. from some neural network. In the corollary below, we  give sufficient conditions on $N$ that ensure the convergence of stochastic sub-gradient descent on $N\circ \cG$. 

\begin{corollary}\label{cor:sgd_converge}
    If $N \colon \RR^s \to \RR$  is definable and locally Lipschitz continuous, then, under the assumptions of Proposition~\ref{prop:sgd_convergence} and Theorem \ref{thm:gril_affine}, stochastic sub-gradient descent on $N \circ \cG$ converges almost surely to critical points of $N \circ \cG$. 
\end{corollary}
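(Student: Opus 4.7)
The plan is to verify the hypotheses of Proposition \ref{prop:sgd_convergence} for the composite map $N \circ \cG \colon \RR^{2n} \to \RR$, namely local Lipschitz continuity and $C^d$-stratifiability. The corollary will then follow immediately, since Assumption C on the learning rate and noise sequences is already posited.

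First I would check local Lipschitzness. By Proposition \ref{prop:gril_cont}, each coordinate map $\Lambda^{\vp_j}_{k,\ell}$ is Lipschitz continuous on $\RR^{2n}$, so the vector-valued map $\cG$ is itself Lipschitz (its Lipschitz constant being bounded, e.g., by the $\ell_2$-norm of the vector of individual Lipschitz constants). Since $N$ is assumed locally Lipschitz on $\RR^s$, and $\cG$ maps any bounded set in $\RR^{2n}$ to a bounded set in $\RR^s$, the composition $N \circ \cG$ is locally Lipschitz on $\RR^{2n}$.

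Next I would establish definability, from which $C^d$-stratifiability follows. By Theorem \ref{thm:gril_affine}, $\cG$ is piecewise affine relative to the hyperplane arrangement $\cH$; in particular, its graph is a finite union of (relatively open) polyhedral pieces in $\RR^{2n+s}$, hence semi-algebraic, hence definable in any o-minimal structure containing the semi-algebraic sets. The map $N$ is definable in an o-minimal structure by assumption, and we may take a common o-minimal structure containing both (for instance, the one from \cite{exp_definable} that extends the semi-algebraic sets). The class of definable functions is closed under composition (by condition (5) in the definition of o-minimal structure together with standard graph manipulations), so $N \circ \cG$ is definable. As recalled in Section \ref{subsec:bg_ominimal}, any definable function in an o-minimal structure admits a Whitney $C^p$ stratification for every $p \geq 1$, and in particular a $C^d$-stratification.

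With $N \circ \cG$ locally Lipschitz and $C^d$-stratifiable, Proposition \ref{prop:sgd_convergence} applies and yields that almost surely every limit point of the iterates $\{\mathbf{x}_k\}_{k \geq 1}$ produced by the stochastic sub-gradient method is critical for $N \circ \cG$, and that the function values converge. The main thing to watch is not a technical obstacle but a bookkeeping one: ensuring that the o-minimal structure chosen simultaneously accommodates the semi-algebraic map $\cG$ and the (possibly transcendental, e.g.\ neural-network-derived) map $N$; once this is settled, all pieces of the argument plug together directly.
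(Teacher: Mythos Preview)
Your proposal is correct and follows essentially the same route as the paper: show $\cG$ is definable (via piecewise affineness from Theorem~\ref{thm:gril_affine}) and locally Lipschitz, use that definability and local Lipschitzness are closed under composition, and then invoke Proposition~\ref{prop:sgd_convergence}. Your version is somewhat more detailed---in particular you flag the need for a common o-minimal structure containing both $\cG$ and $N$, which the paper's sketch leaves implicit---but the underlying argument is the same.
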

\textit{Overview of the proof:} In the discussion towards the end of section \ref{subsec:stoc_subgrad}, we saw that any piecewise affine map is definable. 
Since $\cG$ is piecewise affine, it is locally Lipschitz and definable. We know that the composition of two definable functions is definable, and that the composition of two locally Lipschitz functions is locally Lipschitz. These facts, put together with Proposition \ref{prop:sgd_convergence}, immediately lead to the fact that stochastic sub-gradient descent converges almost surely to critical points of $N \circ \cG$. 

\begin{figure}
    \centering
    \includegraphics[scale=0.7]{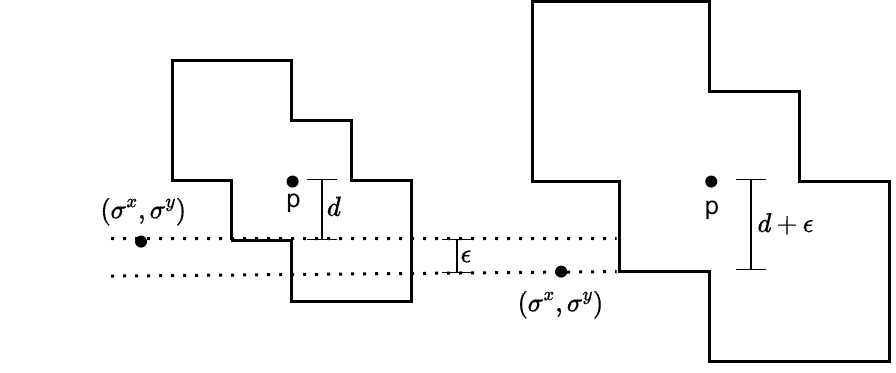}
    \caption{An intuitive understanding of the gradient assignment described in Theorem \ref{thm:gril_diff}. The $y$-coordinate $\sigma^y$ is at a distance of $d$ from the $y$-coordinate of $\vp$ in the left figure; $\sigma$ is the only constraining simplex for the $2$-worm. In the figure on the right, $\sigma^y$ has \emph{reduced} by $\epsilon$ and is now at a distance $d+\epsilon$ from $\vp^y$. As a consequence, the value of \gril{} increases from $d$ to $d + \epsilon$. Thus, $\frac{\partial \Lambda^{\vp}_{k,\ell}(\mathbf{v}_f)}{\partial \sigma^y} = -1$.}
    \label{fig:grad_intuitive_expln}
\end{figure}

\subsection{Differential of \gril{}}\label{subsec:grad_gril}
 In order to back-propagate gradients through the \gril{} computation, we need an explicit formula for the differential of $\cG$. Such a formula is needed only in the top-dimensional strata, where $\cG$ is actually smooth. In lower-dimensional strata, sub-gradients can be approximated using, e.g., gradient sampling~\cite{burke2020gradient}. 

In section \ref{subsec:gril_piecewise_affine}, we saw that $\cG$ is piecewise affine, which implies that its differential is constant in each top-dimensional stratum. In order to compute it, we need to determine the sign of the gradient at different simplex coordinates. To do so, given a worm, we distinguish between simplex coordinates that constrain the lower boundary of the worm and those that constrain its upper boundary. Since each simplex has two coordinates, $x$ and $y$, and these can be constraining either the upper boundary or the lower boundary, we get four different cases: \emph{lower $x$-constraining, lower $y$-constraining, upper $x$-constraining} and \emph{upper $y$-constraining.} All these cases are shown in Figure \ref{fig:constraining_coord}.

We deduce an explicit formula for the differential of $\mathcal{G}$ in the top-dimensional strata.

\begin{theorem}\label{thm:gril_diff}
    Let $K$ be a simplicial complex with $n$ simplices. Let $k, \ell \in \mathbb{N}$ and $\{\mathbf{p}_j\}_{j=1}^s$ be the $s$ sampled center points for the $\ell$-worms. Then, the differential of $\cG$ at any $\mathbf{v}_f$ in a top-dimensional stratum is given by: 
    \begin{equation*}
        \begin{pmatrix}
            \frac{\partial \Lambda^{\vp_1}_{k,\ell}(\mathbf{v}_f)}{\partial \sigma_{1}^x} & \frac{\partial \Lambda^{\vp_1}_{k,\ell}(\mathbf{v}_f)}{\partial \sigma_{1}^y} & \frac{\partial \Lambda^{\vp_1}_{k,\ell}(\mathbf{v}_f)}{\partial \sigma_{2}^x} & \hdots & \frac{\partial \Lambda^{\vp_1}_{k,\ell}(\mathbf{v}_f)}{\partial \sigma_{n}^y} \\
            \vdots & & & &\vdots \\
            \frac{\partial \Lambda^{\vp_s}_{k,\ell}(\mathbf{v}_f)}{\partial \sigma_{1}^x} & \hdots & \hdots&  & \frac{\partial \Lambda^{\vp_s}_{k,\ell}(\mathbf{v}_f)}{\partial \sigma_{n}^y}
        \end{pmatrix}_{s \times 2n}
    \end{equation*}
     where, 
    \[ 
    \frac{\partial \Lambda^{\vp_j}_{k,\ell}(\mathbf{v}_f)}{\partial \sigma_{i}^x} = 
    \begin{cases}
        -1, & \text{if } \sigma_{i} \text{ is lower x-constraining for } \boxed{\vp_j}^\ell_{d_j},\\
        +1, & \text{if } \sigma_{i} \text{ is upper x-constraining for } \boxed{\vp_j}^\ell_{d_j},\\
        0, & \text{otherwise},
    \end{cases}
    \]
    \[ 
    \frac{\partial \Lambda^{\vp_j}_{k,\ell}(\mathbf{v}_f)}{\partial \sigma_{i}^y} = 
    \begin{cases}
        -1, & \text{if } \sigma_{i} \text{ is lower y-constraining for } \boxed{\vp_j}^\ell_{d_j}\\
        +1, & \text{if } \sigma_{i} \text{ is upper y-constraining for } \boxed{\vp_j}^\ell_{d_j}\\
        0, & \text{otherwise},
    \end{cases}
    \]
    and $d_j$ is the \gril{} value at $\vp_j$.
    
\end{theorem}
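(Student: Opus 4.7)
The plan is to leverage the piecewise affine structure of $\cG$ established in Theorem \ref{thm:gril_affine}: on a top-dimensional stratum of the arrangement $\cH$, the differential exists, is constant, and is therefore completely determined by tracking the effect of an infinitesimal perturbation of each simplex coordinate on each GRIL value $\Lambda^{\vp_j}_{k,\ell}(\mathbf{v}_f)$. I would organise the argument row by row of the Jacobian.

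Fix a sample point $\vp_j$ and set $d_j = \Lambda^{\vp_j}_{k,\ell}(\mathbf{v}_f)$. By Proposition \ref{prop:unique_constraing_splx}, the worm $\boxed{\vp_j}^\ell_{d_j}$ has a unique constraining simplex coordinate $\sigma_{i_0}^{*_0}$ in the given stratum. For any other simplex coordinate $\sigma_i^{*}$, I would argue that a sufficiently small perturbation preserves the configuration of all simplex coordinates relative to the worm's boundary segments, both at width $d_j$ and at widths slightly beyond $d_j$. Hence the generalized rank over the affected worms is locally unchanged, the maximum admissible width stays at $d_j$, and $\partial \Lambda^{\vp_j}_{k,\ell}(\mathbf{v}_f) / \partial \sigma_i^{*} = 0$.

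For the unique constraining coordinate $\sigma_{i_0}^{*_0}$, I would appeal to the formal definition in Appendix \ref{app:diff_gril}: in a top-dimensional stratum this coordinate sits on the active boundary segment of $\boxed{\vp_j}^\ell_{d_j}$ and satisfies a linear relation with $d_j$ of slope $\pm 1$, namely $\vp_j^{*_0} - \sigma_{i_0}^{*_0} = d_j$ in the lower-constraining cases and $\sigma_{i_0}^{*_0} - \vp_j^{*_0} = d_j$ in the upper-constraining cases. Differentiating these relations yields $\partial \Lambda^{\vp_j}_{k,\ell}(\mathbf{v}_f) / \partial \sigma_{i_0}^{*_0} = -1$ in the lower cases and $+1$ in the upper ones, with the same analysis regardless of whether $*_0 = x$ or $*_0 = y$. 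This produces the four-case piecewise formula displayed in the theorem and matches the intuition of Figure \ref{fig:grad_intuitive_expln}: in the illustrated lower $y$-constraining configuration, decreasing $\sigma^{y}$ by $\epsilon$ enlarges the worm by $\epsilon$, so the partial equals $-1$.

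The main work lies in justifying the unit-slope form of the constraint. This is where Proposition \ref{prop:unique_constraing_splx} is essential: the arrangement $\cH$ is built so that its top-dimensional strata isolate exactly those configurations in which a single simplex coordinate is active and the corresponding constraint is of the canonical slope-$\pm 1$ form; configurations in which a deeper step of the stair-like $\ell$-worm boundary becomes active, or in which constraints tie, are confined to the positive-codimension faces of $\cH$ and thus do not occur in top-dimensional strata. Once this is in hand, combining the row-wise analyses across all $s$ sample points assembles the full $s \times 2n$ Jacobian stated in the theorem.
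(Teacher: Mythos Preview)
Your proposal is correct and follows the same line as the paper's proof: in a top-dimensional stratum, isolate the unique constraining simplex coordinate for each worm, read off $\pm 1$ from the affine relation between that coordinate and the \gril{} value, and set all other partials to zero via a small-perturbation argument. The paper carries this out as a direct perturbation computation (writing $d_j = \vp_j^x - \sigma_i^x$ in the lower-$x$ case and comparing with a nearby bifiltration $f'$), and, like your final paragraph, does not separately justify the unit-slope form beyond asserting that relation.
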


Recall that each $\mathbf{v}_f$ in a top-dimensional stratum of $\mathcal{S}_\cH$ corresponds to a bifiltration function $f$ with a unique constraining simplex. Refer to Figure \ref{fig:grad_intuitive_expln} for an intuitive explanation about the sign of the gradient and the different cases in Theorem \ref{thm:gril_diff}, as shown by the arrows in Figure \ref{fig:constraining_coord}.

\begin{corollary}
    Given the conditions of Theorem~\ref{thm:gril_diff}, partial derivatives of $\cG$ with respect to $\vp_j^x$ and $\vp_j^y$ also exist and are given by:
    \[
    \frac{\partial \Lambda^{\vp_j}_{k,\ell}(\mathbf{v}_f)}{\partial \vp_{j}^x} = 
    \begin{cases}
        +1, & \text{if there exists a lower x-constraining} \\
         & \text{simplex for } \boxed{\vp_j}^\ell_{d_j}, \\
        -1, & \text{if there exists an upper x-constraining} \\ 
        & \text{simplex for }\boxed{\vp_j}^\ell_{d_j}, \\
        0, & \text{otherwise,}
    \end{cases}
    \]
    \[ 
    \frac{\partial \Lambda^{\vp_j}_{k,\ell}(\mathbf{v}_f)}{\partial \vp_{j}^y} = 
    \begin{cases}
        +1, & \text{if there exists a lower y-constraining} \\
         & \text{simplex for } \boxed{\vp_j}^\ell_{d_j}, \\
        -1, & \text{if there exists an upper y-constraining} \\ 
        & \text{simplex for }\boxed{\vp_j}^\ell_{d_j}, \\
        0, & \text{otherwise.}
    \end{cases}
    \]
\end{corollary}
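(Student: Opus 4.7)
The plan is to exploit the translation invariance of the $\ell$-worm construction and reduce the statement to the already-proven Theorem~\ref{thm:gril_diff}. Observe first that for any $t \in \RR$, translating the center point by $t\,e_x$ (with $e_x = (1,0)$) translates the worm: $\boxed{\vp_j + t e_x}^{\ell}_{d} = \boxed{\vp_j}^{\ell}_{d} + t e_x$. If $f - t e_x$ denotes the bifiltration function obtained by subtracting $t$ from every $x$-coordinate (still a valid bifiltration function since the product order is translation-invariant), then $K^{f - te_x}_{\mathbf{x}} = K^f_{\mathbf{x} + te_x}$ for all $\mathbf{x} \in \RR^2$. Applying homology, the restricted diagrams $M_f|_{\boxed{\vp_j + te_x}^\ell_d}$ and $M_{f - te_x}|_{\boxed{\vp_j}^\ell_d}$ are isomorphic via the reindexing $\mathbf{y} \mapsto \mathbf{y} - te_x$, so this isomorphism intertwines the canonical limit-to-colimit maps and yields $\RKG^{M_f}(\boxed{\vp_j + te_x}^\ell_d) = \RKG^{M_{f - te_x}}(\boxed{\vp_j}^\ell_d)$. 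Taking the supremum over $d$ in Definition~\ref{def:gril} gives
\[
\Lambda^{\vp_j + t e_x}_{k,\ell}(\mathbf{v}_f) \;=\; \Lambda^{\vp_j}_{k,\ell}(\mathbf{v}_f - t\,\mathbf{1}_x),
\]
where $\mathbf{1}_x \in \RR^{2n}$ has entry $1$ in every odd-indexed position and $0$ elsewhere.

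In a top-dimensional stratum of $\mathcal{S}_\cH$, small perturbations of either argument preserve the stratum, so both sides are differentiable in $t$ near $0$. Differentiating at $t = 0$ and applying the chain rule gives
\[
\frac{\partial \Lambda^{\vp_j}_{k,\ell}(\mathbf{v}_f)}{\partial \vp_j^x} \;=\; -\sum_{i=1}^n \frac{\partial \Lambda^{\vp_j}_{k,\ell}(\mathbf{v}_f)}{\partial \sigma_i^x}.
\]
By Proposition~\ref{prop:unique_constraing_splx}, the worm $\boxed{\vp_j}^\ell_{d_j}$ admits a unique constraining simplex coordinate in a top-dimensional stratum, so by Theorem~\ref{thm:gril_diff} at most one term on the right-hand side is nonzero. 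If the unique constraining coordinate is $\sigma_i^x$ and $\sigma_i$ is lower $x$-constraining, that term is $-1$, producing $+1$ on the left; if it is upper $x$-constraining, the term is $+1$, producing $-1$; and if the unique constraining coordinate is a $y$-coordinate, every $x$-partial in the sum vanishes and the left side is $0$. This matches the three cases in the statement for $\partial/\partial \vp_j^x$, and the formula for $\partial/\partial \vp_j^y$ follows from an identical argument with $e_y = (0,1)$ in place of $e_x$ and $\mathbf{1}_y$ (entries $1$ in the even-indexed positions) in place of $\mathbf{1}_x$.

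The only subtlety is justifying the translation-invariance identity at the level of the canonical limit-to-colimit map, but this reduces to the elementary observation above that translating the worm and translating the sublevel-set bifiltration produce naturally isomorphic diagrams; functoriality of $\varprojlim$ and $\varinjlim$ then transports the rank. Everything else is elementary chain-rule bookkeeping together with an appeal to Theorem~\ref{thm:gril_diff}.
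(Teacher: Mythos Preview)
Your argument is correct. The paper states the corollary without proof, treating it as immediate from the computation in the proof of Theorem~\ref{thm:gril_diff}: there one establishes that in a top-dimensional stratum the \gril{} value satisfies an explicit affine relation such as $d_j = \vp_j^x - \sigma_i^x$ (for the lower $x$-constraining case), and differentiating this same relation with respect to $\vp_j^x$ instead of $\sigma_i^x$ yields the claimed signs directly.

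Your route is genuinely different and more conceptual. Rather than revisiting the explicit affine formula, you exploit the translation equivariance $\Lambda^{\vp_j + te_x}_{k,\ell}(\mathbf{v}_f) = \Lambda^{\vp_j}_{k,\ell}(\mathbf{v}_f - t\mathbf{1}_x)$ and differentiate to obtain $\partial_{\vp_j^x}\Lambda = -\sum_i \partial_{\sigma_i^x}\Lambda$, then invoke Theorem~\ref{thm:gril_diff} as a black box. This buys robustness: it does not depend on the precise coefficient in the affine relation (e.g., whether the constraining coordinate sits at distance $d$ or $m\cdot d$ from the center), only on the sign pattern already recorded in Theorem~\ref{thm:gril_diff}. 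The paper's implicit approach is shorter but requires re-opening the proof of Theorem~\ref{thm:gril_diff}. One small point worth making explicit in your write-up: the stratification $\mathcal{S}_\cH$ is defined for \emph{fixed} center points, so when you say ``small perturbations of either argument preserve the stratum'' you are tacitly using that the hyperplanes move continuously with $\vp_j$; your translation identity already handles this cleanly, since moving $\vp_j$ is equivalent to moving $\mathbf{v}_f$ inside the open top-dimensional cell.
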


\subsection{Practical Considerations}\label{subsec:pract_consider}
In section \ref{subsec:gril_piecewise_affine} we saw that $\cG$ is piecewise affine and in section \ref{subsec:sgd} we deduced that stochastic sub-gradient descent converges almost surely to critical points of $N\circ \cG$ for any definable and locally Lipschitz loss function $N$. We gave an explicit formula for the differential of \gril{} in the top-dimensional strata in section \ref{subsec:grad_gril}, and at the beginning of that section we argued that gradient sampling can be used to approximate sub-gradients in lower-dimensional strata. In practice, for the sake of computational efficiency, we use a simple variant of gradient sampling, which consists of sampling a single nearby point and taking its gradient.

So far, we considered a bifiltration function on the entire simplicial complex $K$. However, in practice, we may need to extend the filtration function to the entire simplicial complex based on filtration function values on certain simplices. For example, in a lower-star filtration, the filtration function values on the higher dimensional simplices are dictated by the filtration function values on the vertices of the simplicial complex. In a Rips filtration, the filtration function values on the vertices are 0, while the filtration function values on higher dimensional simplices are dictated by the filtration function values on the edges.

In such scenarios, the bifiltration function $f$ is given on a subcomplex $L$ of the simplicial complex $K$. This is extended to a bifiltration function $\Bar{f}$ on the entire simplicial complex in a piecewise constant manner. We use the example of lower-star bifiltration to show how our framework fits to this type of scenario. Let $f \colon K^0 \to \RR^2$ be a function on the vertices of the simplicial complex $K$ with $m$ vertices and $n$ simplices. The map $f$ is extended to a piecewise constant map $\Bar{f} \colon K \to \RR^2$ as follows: for an edge $e=(u,v)$, we let $\Bar{f}_x(e) = \max\{f_x(u), f_x(v)\}$ and $\Bar{f}_y(e) = \max\{f_y(u), f_y(v)\}$; the values on higher dimensional simplices are defined inductively. Notice that for each simplex $\sigma \in K$, there is a \emph{maximal $x$-vertex} and a \emph{maximal $y$-vertex}, the vertices that give the $x$ and $y$-values to the simplex respectively. We know that $f$ can be represented as a vector $\mathbf{v}_f \in \RR^{2m}$ and $\Bar{f}$ as a vector $\mathbf{v}_{\Bar{f}} \in \RR^{2n}$ after ordering the simplices of $K$. Consider the map $q \colon \RR^{2m} \to \RR^{2n}$ given by $q(\mathbf{v}_f) = \mathbf{v}_{\Bar{f}}$. The map $q$ is piecewise affine and thus, $\cG \circ q$ is piecewise affine as well. Therefore, we can apply stochastic sub-gradient descent on $N \circ \cG \circ q$ with analogous convergence guarantees. 

Observe that we get an arrangement of hyperplanes on $\RR^{2m}$, $\cH_q$, analogous to $\cH$ on $\RR^{2n}$ that we saw in the previous subsections. On the top-dimensional cells of $\cH_q$, the differential of $q$ is well-defined and constant and expressions analogous to the ones in Theorem \ref{thm:gril_diff} can be derived,
and the chain rule can be used for back-propagating gradients. As mentioned previously, we use a simple form of gradient sampling to approximate sub-gradients when the gradient is not well-defined. For the experimental section that follows, $N$ is a neural network with one of the standard choices for loss functions, which is definable as discussed towards the end of section \ref{subsec:bg_ominimal}.

We observe that if the bifiltration $f$ is extended in the manner described above to $\Bar{f}$, then we end up in a scenario where multiple simplices have the same $x$ or $y$-values. This is a degenerate scenario with respect to the assumptions in Theorem \ref{thm:gril_affine}. To avoid this, in practice, we add an infinitesimal perturbation to the higher dimensional simplices to ensure the values are not exactly the same. We note that this is, in spirit, similar to gradient sampling. 


\section{Experiments}
\label{sec:experiments}

In this section, we present the experimental results on various bio-activity prediction datasets and benchmark graph datasets. We begin the section by describing our experimental setup and then move on to a brief description of bio-activity prediction datasets followed by experimental results on those datasets. Towards the end of the section, we show that \grild{} can be, more generally, applied in the context of bifiltration learning on standard benchmark graph datasets. We compare with existing multiparameter persistent homology methods on these datasets. All the reported accuracy/ROC-AUC scores, in this section, are $5$-fold cross-validated scores. 

\begin{figure*}[!htb]
    \centering
    \includegraphics[scale=0.07]{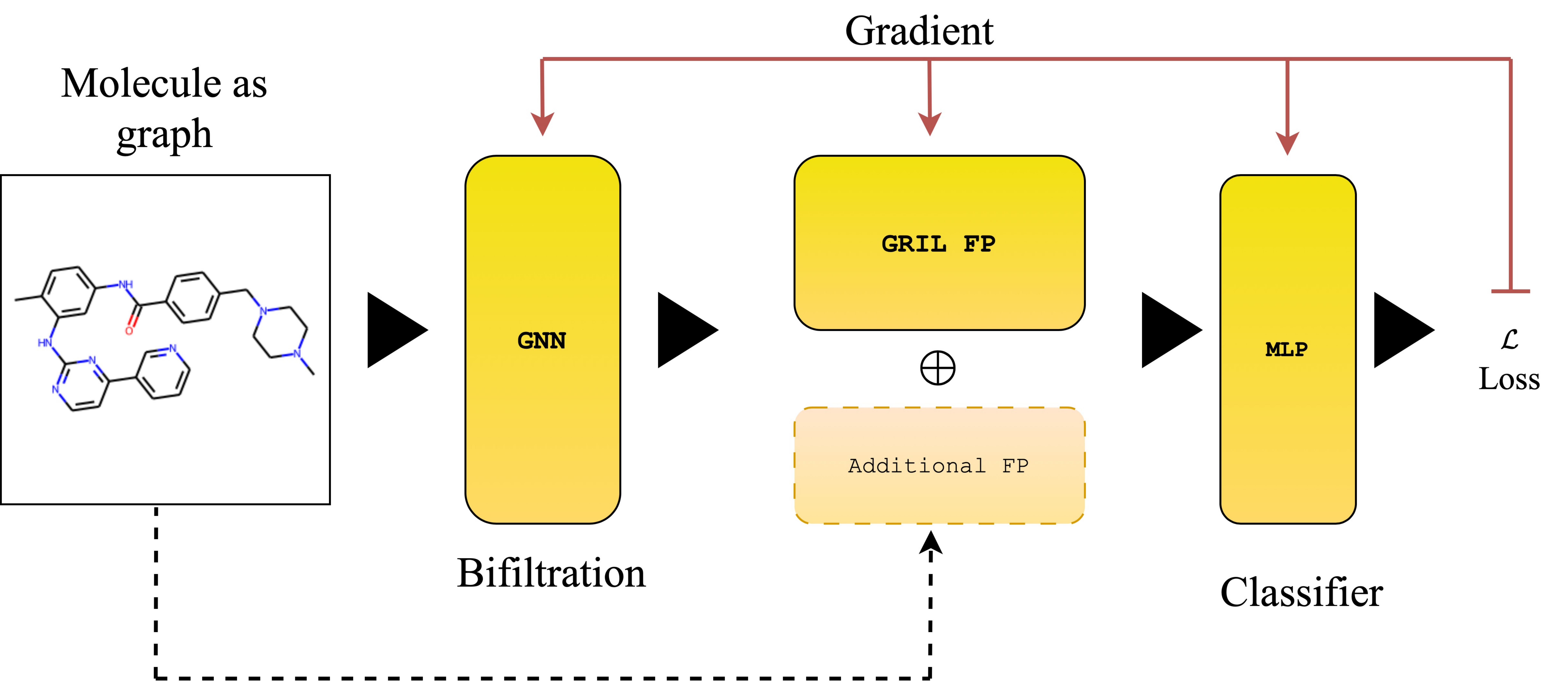}
    \caption{Architecture choice for bio-activity prediction; the bifiltration fuction $f$ is learnt compared to the standard multiparameter pipeline; $\oplus$ denotes concatenation of vectors.}
    \label{fig:framework}
\end{figure*}

\subsection{Experimental Setup}
Every instance in each dataset is an attributed graph with a label associated with it. We use a Graph Isomorphism Network (GIN)~\cite{gin_xu2018how} to obtain a bifiltration function over the vertices. We consider the lower-star bifiltration of this function. This gives us a bifiltration function on the entire graph, which we denote as $f$. Let $\mathbf{v}_f$ denote the vector representation of $f$ which we get after ordering the $n$ simplices of the graph. 
We divide the range of $f$ into a grid of size $100 \times 100$. We randomly initialize the $s$ center points from this grid and compute the vector $\cG_{k,\ell}(\mathbf{v}_f)$ (Eq.~\eqref{eq:gril_vec}) for $k=1,2,3$ and $\ell = 2$ at these center points. 
This vector $\cG_{k,\ell}(\mathbf{v}_f)$ is fed to a $3$-layer Multilayer Perceptron (MLP) classifier and the loss is back-propagated through the \gril{} computation according to the gradient described in Section \ref{sec:diff_of_gril}. Note that the positions of the $s$ center points are also optimized as the gradient can be backpropagated to the coordinates of the center points as well. Additional details can be found in Appendix \ref{app:experiments}.

\subsection{Bio-activity Prediction Datasets}
The data is extracted from the ChEMBL database~\cite{gaulton2012chembl,davies2015chembl}. Each  dataset contains the SMILES~\cite{weininger1988smiles} encoding of a novel drug (compound), and activity pairs for a target of interest. For example, the \textsc{EGFR} dataset contains all the molecules that have been tested against epidermal growth factor receptor (EGFR) kinase, and their measured bio-activity. The SMILES encodings of the molecules are then converted to graphs with \textsc{Molfeat}~\cite{molfeat}. Bio-activity is measured in \textit{half maximal inhibitory concentration $(IC_{50})$}, which qualitatively indicates how much a drug is needed, \textit{in vitro}, to inhibit a particular process by $50\%$. To facilitate the comparison of $IC_{50}$ values, it is common practice to convert $IC_{50}$ to $pIC_{50} = -\log_{10}(IC_{50})$, expressed in molar units. Threshold for activity cutoff $pIC_{50} = 6.3$ is used throughout the paper. Additional details are given in Appendix~\ref{app:experiments}.
\begin{table*}
    \centering
    \begin{tabular}{ccccc} 
    \toprule
         \textbf{Dataset} &  \textbf{GIN}  &\textbf{GIN-\gril}&  \textbf{GIN-\textsc{Mpsm}}&\textbf{\grild}   \\ 
         \midrule
         EGFR             &  $55.60 \pm 8.61$                &$58.39 \pm 2.51$&  $\mathbf{66.97 \pm 1.87}$ & $65.38 \pm 5.34$                \\ 
         ERRB2            &  $57.06 \pm 8.58$                &$61.28 \pm	3.66$&  $68.75 \pm 1.98$&$\mathbf{69.24 \pm 5.06}$                \\\ 
         CHEMBL1163125    &  $58.48\pm4.37$            &$54.13	\pm 1.09$&  $61.23 \pm 2.47$&$\mathbf{65.69 \pm 1.57}$                \\ 
         CHEMBL2148       &  $52.88\pm0.91$               &$50.24 \pm 0.18$&  $51.45 \pm 0.87$&$\mathbf{53.13 \pm 3.70}$               \\ 
 CHEMBL4005       & $55.90 \pm 4.63$               & $51.85 \pm 3.55$ &  $\mathbf{61.35 \pm 1.49}$&$59.34 \pm 5.70$               \\
 \bottomrule
    \end{tabular}
    \caption{Test ROC-AUC on ChEMBL datasets. \grild{} performs better than GIN with \emph{sum} pooling and \gril{} with bifiltration obtained from pre-trained GIN.}
    \label{tab:none_gril_comp}
\end{table*}


In the first set of experiments, we compare \grild{} with (i) a standard GNN model--GIN with sum-pooling, (ii) with GIN and \gril{} as a readout layer and with (iii) GIN and \textsc{Mpsm}~\cite{diff_signed_barcodes_24}. Note that for the results on GIN-\gril, \gril{} is used as a passive readout layer, i.e., we obtain a bifiltration function from a pre-trained GIN (pre-trained for graph classification on the same dataset) and compute \gril{} on it. We use the exact same pipeline as GIN-\gril{} and train it end-to-end using \grild{} and GIN-\textsc{Mpsm} and report the results. The classifier, $3$-layer MLP, is the same for all the experiments. Refer to Figure \ref{fig:framework} for the pipeline.
From Table~\ref{tab:none_gril_comp}, we can see that adding topological information after GIN has been trained can improve the performance in some cases and need not be beneficial in others. However, it is clear that adding topological information in an end-to-end framework appears to be beneficial for bio-activity prediction. The performance of \grild{} is comparable to \textsc{Mpsm}.

Further, we perform a series of experiments where we augment other popular molecular fingerprints such as ECFP and Morgan3~\citep{rogers2010extended,morgan1965generation} with the \grild{} framework. In these experiments, we compare the performance of the model with and without  \grild{} augmentation. We report the results in Table \ref{tab:results}. We can see from the table that augmenting with \grild{} seems to, generally, increase the performance of the model. Fingerprints such as ECFP and Morgan3 can essentially represent an infinite number of different molecular features, including stereochemical information~\citep{rogers2010extended} but they are not effective in capturing global features of molecules such as size and shape~\citep{Capecchi_Probst_Reymond_2020}. Combining \grild{} with these fingerprints augments the model with topological information, yielding an improved performance.


\begin{table*}[!htb]
\centering
\resizebox{0.8\textwidth}{!}{
\begin{tabular}{@{}c|cc|cc@{}}
\toprule
\textbf{Dataset} & \textbf{ECFP} & \textbf{ECFP+\grild} & \textbf{Morgan3} & \textbf{Morgan3+\grild} \\ \midrule
EGFR             & $83.27 \pm 1.10$              & $\mathbf{84.37 \pm 1.22}$& $82.39 \pm 1.35$                  &  $\mathbf{83.57 \pm 1.40}$\\
ERRB2            & $83.53 \pm 1.31$              &  $\mathbf{86.24 \pm 2.27}$& $83.19 \pm 1.26$                  & $\mathbf{85.51\pm 1.47}$\\
CHEMBL1163125    & $83.94 \pm 1.23$              & $\mathbf{86.33 \pm 0.67}$&   $83.55 \pm 1.20$                & $\mathbf{84.85 \pm 0.82}$\\
CHEMBL203        & $81.74 \pm 0.90$              & $\mathbf{81.86 \pm 0.90}$& $80.85 \pm 1.32$                 & $\mathbf{81.64 \pm 0.45}$\\
CHEMBL2148       & $73.96 \pm 2.59$& $\mathbf{74.13 \pm 2.33}$& $72.79 \pm 1.97$                  & $\mathbf{74.50 \pm 2.61}$\\
CHEMBL279        & $76.72 \pm 1.34$              &  $\mathbf{78.75 \pm 0.77}$&  $76.76 \pm 0.50$                &                   $\mathbf{78.05 \pm	1.18}$\\
CHEMBL2815       &$73.69 \pm 1.53$               & $\mathbf{76.51 \pm 1.48}$&  $73.42 \pm 0.56$                 &  $\mathbf{75.22 \pm 2.85}$\\
CHEMBL4005       &$\mathbf{80.45 \pm 1.30}$               &  $80.41 \pm 0.97$&  $79.95 \pm 1.30$                & $\mathbf{80.55 \pm 0.82}$\\
CHEMBL4722       & $78.05 \pm	1.64$& $\mathbf{78.51 \pm 1.83}$& $78.76 \pm	1.37$                 & $\mathbf{79.17 \pm 1.30}$\\ \bottomrule
\end{tabular}}
\vspace{-0.1cm}
\caption{Test ROC-AUC scores on ChEMBL datasets; augmenting  ECFP and Morgan3 fingerprints with \grild{} increases the classification performance for most of the datasets.}
\label{tab:results}
\vspace{-0.1cm}
\end{table*}

\begin{table*}[!htb]
\resizebox{\textwidth}{!}{
\centering
\begin{tabular}{@{}cccccccc@{}}
\toprule
\textbf{Dataset} &\textbf{\grild}  &\textbf{\textsc{Mpsm}}& \textbf{\gril} & \textbf{MP-I} & \textbf{MP-L} & \textbf{MP-K} & \textbf{P} \\ \midrule
MUTAG            &$\mathbf{85.09 \pm 5.99}$               &$78.44 \pm 3.33$& $83.49 \pm 3.64$          & $74.99 \pm 2.79$             & $82.42 \pm 3.72$         & $79.27 \pm 2.45$             & $66.50 \pm 0.87$           \\
PROTEINS         &$69.45 \pm 4.11$                     &$68.37 \pm 3.31$& $66.31 \pm 2.34$          & $70.80 \pm 3.09$             & $\mathbf{70.80 \pm 1.31}$         & $61.70 \pm 2.98$              & $59.57 \pm 0.08$            \\
DHFR             &$61.24 \pm 4.37$                    &$60.97 \pm 1.98$& $\mathbf{61.64 \pm 1.66}$          & $60.98	\pm 0.10$              & $61.11 \pm 0.25$         & $60.98 \pm 0.10$             &  $60.98	\pm 0.10$          \\
COX2             &$78.16 \pm 0.41$               &$77.73 \pm 0.36$& $78.16 \pm 0.41$          &   $78.16 \pm 0.41$           & $78.16 \pm 0.41$         & $78.16 \pm 0.41$           & $78.16 \pm 0.41$           \\ 
 IMDB-BINARY &$\mathbf{62.60 \pm 6.56}$  & $60.80 \pm 3.82$& $50.00 \pm 0.00$& $56.60 \pm 2.94$& $50.00 \pm 0.00$ & $50.30 \pm 1.12$ & $50.00 \pm 0.00$\\\bottomrule
\end{tabular}
}
\vspace{-0.2cm}
    \caption{Test accuracies of \grild{} on benchmark graph datasets.}
\label{tab:graph_benchmark_results}
\vspace{-0.3cm}
\end{table*}

\begin{table}[h!]
    \centering
    \begin{tabular}{ccc}
    \toprule
        \textbf{Dataset} & \textbf{\grild} & \textbf{\textsc{Mpsm}}\\
        \midrule
         MUTAG & 00:05:08 & 00:08:47\\
         COX2 & 00:11:34 & 00:18:32\\
         DHFR & 00:16:48 & 00:27:12\\
         PROTEINS & 00:30:03 & 00:50:26\\
         \bottomrule
    \end{tabular}
    \vspace{0.2cm}
    \caption{Reported times (hh:mm:ss) are training times per fold averaged over 5 folds that we used for training. All the experiments have been performed on a machine with AMD EPYC 7313 16-Core Processor and NVIDIA A10 GPU.}
    \label{tab:training_times}
    \vspace{-0.4cm}
\end{table}

\subsection{Benchmark Graph Datasets}
\grild{} can be used more generally for filtration learning on graph datasets. We perform a series of experiments with benchmark graph datasets such as \textsc{Mutag, Proteins, Dhfr, Cox2}~\cite{TUDatasets} and compare with existing multiparameter persistence methods. Details about the datasets are given in Appendix \ref{app:experiments}. We report the results in Table \ref{tab:graph_benchmark_results}. For a valid comparison, we use a $3$-layer MLP as the classifier for all the multiparameter signatures. Hence, the numbers reported look different from the ones in \cite{gril23}, \cite{Carriere_Multipers_Images} where the authors consider XGBoost~\citep{xgboost}. In fact, in \cite{gril23}, the authors show that XGBoost performs much better than a $3$-layer MLP classifier for \gril{} vectors on these datasets. We can see from the table that learning the bifiltration function seems to perform better than multiparameter persistence methods on popular choices of bifiltration functions on most datasets. We can also see that \grild{} performs better than \gril{}, supporting our argument for an end-to-end learning framework. We also see that \grild{} performs better than \textsc{Mpsm}. The training times for the two end-to-end learning methods, \grild{} and \textsc{Mpsm} are reported in Table~\ref{tab:training_times}. We can see that the training times for \grild{} are 
improved by about $40\%$ compared to the ones of \textsc{Mpsm}.
We refer the reader to Figure~ \ref{fig:dhfr-gril} for visualizations of bifiltration functions learnt by \grild.

\begin{figure}[!htb]
    \centering
    \includegraphics[width=\textwidth]{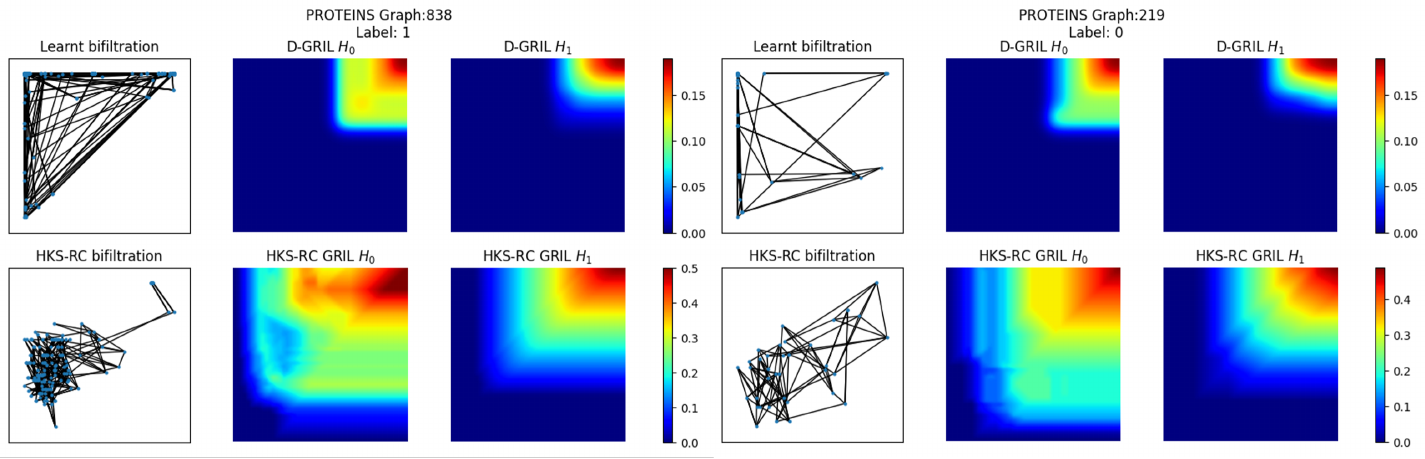}
    \caption{The figure compares the learnt bifiltration function with the Heat-Kernel Signature-Ricci Curvature (HKS-RC) bifiltration on two randomly selected graph instances (838 and 219) of \textsc{Proteins} dataset. These two instances have different labels, 1 and 0 respectively. In the first column, bifiltration function on the vertices of these graphs are plotted. We can see that the learnt bifiltration function is very different from the HKS-RC bifiltration. In the second and third column, \gril{} vectors are shown using a heatmap for $H_0$ and $H_1$ respectively. We can observe that these signatures are very different in nature. This provides some evidence that the model is learning a totally different bifiltration function as compared to HKS-RC, which is one of the common choices for bifiltration function on graphs.}
    \label{fig:dhfr-gril}
\end{figure}

\section{Conclusion}
\label{sec:conclusion}
In this paper, we propose a differentiable framework using \gril. We show that stochastic sub-gradient descent converges almost surely on \gril{}, and we compute an explicit formula for the differential of \gril. We use this formula to back-propagate gradients through the \gril{} computation, yielding the differentiable framework \grild. We use \grild{} to show that adding topological information in an end-to-end manner is beneficial for bio-activity prediction. Further, we show that \grild{} can be used in a more general setting, for filtration learning on graphs. Moreover, we show that \grild{} is faster and lighter, both in theory and in practice, than \textsc{Mpsm}. Our results indicate that learnt bifiltration functions on graphs, generally, give a better performance than the standard choices for bifiltration functions. This indicates, and we believe, that learning filtration functions can prove to be more informative for topological methods in machine learning. We hope that this work sparks interest and motivates research in this direction.

\section{Acknowledgement}
This work is supported partially by NSF grants CCF 2049010 and DMS 2301360.



\newpage
\bibliography{ref}
\bibliographystyle{icml2024}

\newpage

\appendix
\onecolumn

\section{Formal Definitions and Proofs}
\label{app:diff_gril}

\begin{definition}[Interval in $\RR^2$]
        An interval in ${\RR^2}$ is a subset $\emptyset \neq I \subseteq {\RR^2}$ that satisfies the following:
    \begin{enumerate}
        \item If $\mathbf{u},\mathbf{v} \in I$ and $\mathbf{u} \leq \mathbf{w} \leq \mathbf{v}$, then $\mathbf{w} \in I$;
        \item If $\mathbf{u},\mathbf{v} \in I$, then there exists a finite sequence ($\mathbf{u}=\mathbf{u}_0,\mathbf{u}_1, , ... , \mathbf{u}_{m}=\mathbf{v}) \in I$ so that every consecutive points $\mathbf{u}_i,\mathbf{u}_{i+1}$
        are comparable in the partial order for $i\in \{0,\ldots,m-1\}$.
    \end{enumerate}
    \label{def:interval}
    \end{definition}
    
\begin{definition}[Upper-set and lower-set]
    Given a poset $(P, \leq)$, the \emph{upper-set} of $x\in P$ is defined as
    \begin{equation*}
        x ^{\uparrow P} \coloneqq \{y \in P \colon x \leq y\}.
    \end{equation*}
    Similarly, the \emph{lower-set} of $x \in P$ is defined as
    \begin{equation*}
        x^{\downarrow P} \coloneqq \{y \in P \colon y \leq x \}.
    \end{equation*}
\end{definition}

\begin{definition}[Upper and lower boundary of $\ell$-worm]
    Let an $\ell$-worm centered at $\vp$ with width $d$, denoted as $\boxed{\vp}_{d}^{\ell}$, be given. A point $\mathbf{t}$ is said to be on the \emph{upper boundary} of the worm if $\mathring{\mathbf{t}}^{(\uparrow \RR^2)} \cap \boxed{\vp}_d^\ell = \emptyset$ where $\mathring{\mathbf{t}}^{(\uparrow \RR^2)}$ denotes the open upper-set of $\mathbf{t}$ in $\RR^2$. The collection of all such points constitutes the \emph{upper boundary} of the worm. Similarly, a point $t$ is on the \emph{lower boundary} if $\mathring{\mathbf{t}}^{(\downarrow \RR^2)} \cap \boxed{\vp}_d^\ell = \emptyset$ and the collection of all such points constitutes the \emph{lower boundary} of the worm.
\end{definition}

\begin{definition}[Constraining Simplex Coordinate]
    Given a bifiltration function $f$, $k \in \mathbb{N}$, $\ell \in \mathbb{N}, \vp \in \RR^2$ let $\lambda^{M_f}(\vp, k, \ell) = d$. Let $\sigma$ be a simplex with $\boxed{\vp}^{\ell}_{d} \cap f(\sigma)^{\uparrow \RR^2} \neq \emptyset$ such that one of the following two conditions holds:
    \begin{enumerate}
        \item $f_x(\sigma) = \vp_x \pm j \cdot d$
        \item $f_y(\sigma) = \vp_y \pm j \cdot d$
    \end{enumerate}
     for some $j \in \{1, \hdots, \ell - 1\}$. Then $\sigma$ is called a \emph{constraining simplex} for $\boxed{\vp}^{\ell}_d$. If $\sigma$ satisfies (1), $\sigma^x (=f_x(\sigma))$ is the \emph{constraining simplex coordinate} or equivalently, $\sigma$ is called \emph{x-constraining} and if it satisfies (2), $\sigma^y (=f_y(\sigma))$ is the \emph{constraining simplex coordinate} or equivalently $\sigma$ is called \emph{y-constraining}.
\end{definition}

\begin{definition}
    Let $f$ be a bifiltration function. Let $\sigma$ be a constraining simplex for $\boxed{\vp}^\ell_d$. $\sigma$ is said to be an \emph{upper constraining} simplex if $f(\sigma)^{\uparrow \RR^2}$ intersects only the upper boundary of $\boxed{\vp}^\ell_d$. $\sigma$ is called a \emph{lower constraining simplex} if $f(\sigma)^{\uparrow \RR^2}$ intersects both lower and upper boundary of $\boxed{\vp}^\ell_d$. $\sigma$ is said to be \emph{lower $x$-constraining} if $\sigma$ is lower constraining and $\sigma^x$ is the constraining simplex coordinate. The notions of \emph{upper $x$-constraining, lower $y$-constraining} and \emph{upper $y$-constraining} are similarly defined.
\end{definition}

\textbf{Formulae for the arrangement of hyperplanes.} Here, we provide the formulae for the different hyperplanes that form the arrangement of hyperplanes described in section \ref{sec:diff_of_gril}. Recall that we have $s$ sampled center points and a bifiltration function on $n$ simplices, $\sigma_1, \hdots, \sigma_n$. 

\begin{enumerate}
    \item 
    \begin{equation*}
    \left \{ \mathbf{v} \in \RR^{2n} \colon |\mathbf{v}[i] - \vp_{j}^x| = m \cdot |\mathbf{v}[k] - \vp_{j}^x|
    \begin{aligned}
    & \text{ ~ for some } j \in \{1,\hdots, s\}, \\
    & \text{ ~ } i,k \equiv 0 (\textrm{mod } 2),\\
    & \text{ ~ }m \in \{0, \hdots, \ell\}
    \end{aligned}\right \}
    \end{equation*}
    
    \item 
    \begin{equation*}
    \left \{ \mathbf{v} \in \RR^{2n} \colon |\mathbf{v}[i] - \vp_{j}^y| = m \cdot |\mathbf{v}[k] - \vp_{j}^y|
    \begin{aligned}
    & \text{ ~ for some } j \in \{1,\hdots, s\}, \\
    & \text{ ~ }i,k \equiv 1 (\textrm{mod } 2),\\
    & \text{ ~ }m \in \{0, \hdots, \ell\}
    \end{aligned}\right \}
    \end{equation*}

    \item 
    \begin{equation*}
    \left \{ \mathbf{v} \in \RR^{2n} \colon |\mathbf{v}[i] - \vp_{j}^x| = m \cdot |\mathbf{v}[k] - \vp_{j}^y|
    \begin{aligned}
    & \text{ ~ for some } j \in \{1,\hdots, s\}, \\
    & \text{ ~ }i\equiv 0 (\textrm{mod } 2),\\
    & \text{ ~ }k\equiv 1 (\textrm{mod } 2),\\
    & \text{ ~ }m \in \{1, \hdots, \ell\}
    \end{aligned}\right \}
    \end{equation*}
    
    \item 
    \begin{equation*}
    \left \{ \mathbf{v} \in \RR^{2n} \colon |\mathbf{v}[i] - \vp_{j}^y| = m \cdot |\mathbf{v}_k - \vp_{j}^x|
    \begin{aligned}
    & \text{ ~ for some } j \in \{1,\hdots, s\}, \\
    & \text{ ~ }i\equiv 1 (\textrm{mod } 2),\\
    & \text{ ~ }k\equiv 0 (\textrm{mod } 2),\\
    & \text{ ~ }m \in \{1, \hdots, \ell\}
    \end{aligned}\right \}
    \end{equation*}
\end{enumerate}
The first two sets correspond to the conditions where two simplices are $x$-constraining and $y$-constraining respectively. The last two sets correspond to the condition where one simplex is $x$-constraining and one simplex is $y$-constraining.

\begin{remark}
    The stratification $\mathcal{S}_{\cH}$ associated with the arrangement of hyperplanes is affine. In each stratum of this stratification, the relative ordering of the simplices along each coordinate is fixed. This is because, the equations of the hyperplanes ensure that no two simplices have equal coordinate values as that would mean that the distance of one of the simplices to any center point along that coordinate would be equal to (hence an integral multiple of) the distance of the other simplex. Hence, the relative ordering is fixed in each stratum. This property is important to prove that $\cG$ is affine in each stratum similar to the case in $1$-parameter persistent homology setting.
\end{remark}

\begin{proof}[Proof of Theorem \ref{thm:gril_affine}]\label{proof:gril_affine}
    It is sufficient to prove that $\cG$ is piecewise affine on the top-dimensional strata of $\mathcal{S}_{\cH}$. This is because, $\cG$ being affine on each top-dimensional stratum put together with the facts that $\mathcal{S}_{\cH}$ is affine and $\cG$ is continuous, immediately gives us that $\cG$ is also affine on lower dimensional strata of $\mathcal{S}_\cH$. We prove that each coordinate function of $\cG$ is piecewise affine. For that purpose, let us consider only one $\ell$-worm centered at $\vp$. Let $\mathcal{F}$ be a top-dimensional stratum of $\mathcal{S}_\cH$. Let $\mathbf{v}_f, \mathbf{v}_{f'} \in \mathcal{F}$ be the vector representations of two bifiltration functions. Note that they have the same unique constraining simplex coordinate, $r$, for $\boxed{\vp}^\ell_d$ and $\boxed{\vp}^\ell_{d'}$ respectively. Without loss of generality, assume that $r$ is a lower $x$- constrained coordinate. Clearly, $\cG(\mathbf{v}_f) = d$ and $\cG(\mathbf{v}_{f'})=d'$. We observe that in any stratum of $\mathcal{S}_\cH$, the relative ordering of the simplices is fixed. This is because, the equations of the hyperplanes ensure that no two simplex coordinates are equal inside a stratum. As a consequence, the relative ordering among the simplices gets fixed along each coordinate in each stratum. For the relative order to change, one has to cross one of the neighboring hyperplanes. Let $\mathbf{v}_f + \mathbf{v}_{f'} = \mathbf{v}_{f''}$. Since the relative ordering is among the simplices is fixed, adding two vectors with the same relative ordering will not change the ordering in the resultant vector. Thus, the constraining simplex coordinate for $\mathbf{v}_{f''}$ is also $r$. Thus, the value of $\cG$ at $\mathbf{v}_{f''}$ would be determined by the value of the $r$th coordinate of $\mathbf{v}_{f''}$. Now, $\vp^x - \mathbf{v}_f[r] = d$ and $\vp^x - \mathbf{v}_{f'}[r] = d'$. Thus,
    \begin{align*}
        \cG(\mathbf{v}_f) + \cG(\mathbf{v}_{f'}) &= d + d' \\
        &= 2\vp^x - \mathbf{v}_f[r] - \mathbf{v}_{f'}[r] \\
         &= \vp^x + \vp^x - \mathbf{v}_{f''}[r] \\
         &= \vp^x + \cG(\mathbf{v}_{f''}).
    \end{align*}
    We note that $\vp^x$ is a constant because the sampled center points are fixed. Hence, each coordinate function of $\cG$ is affine on each top-dimensional stratum of $\mathcal{S}_\cH$. Thus, $\cG$ is a piecewise affine map.
\end{proof}

\begin{proof}[Proof of Theorem \ref{thm:gril_diff}]\label{proof:gril_diff}
    For a given rank $k$, let us consider the worm $\boxed{\vp_j}_{d_j}^\ell$ with $\sigma_{i}$ as the constraining simplex. WLOG assume that $\sigma_{i}$ is lower $x$-constraining. Then, we have $d_j = \vp_{j}^x - \sigma_{i}^x$ and $\RKG^{M_f}\left(\boxed{\vp_j}_{d}^\ell\right) < k$ for $d > d_j$ and $\RKG^{M_f}\left(\boxed{\vp_j}_{d}^\ell\right) \geq k$ for $d \leq d_j$. Now, consider the interval $\mathcal{I}_j = (d_j - \epsilon, d_j+\epsilon)$ where $\epsilon = \min(\min \limits_{t \neq i}(|\sigma_{i}^x - \sigma_{t}^x|), \vp_j^x- \sigma_{i}^x)$. Consider another bifiltration function $f'$ such that $f'(\sigma_t) = f(\sigma_t)$ for all $t \neq i$, $f'_x(\sigma_{i}) \in \mathcal{I}_j$ and $f'_y(\sigma_{i}) = f_y(\sigma_{i})$. Let $\mathbf{v}_f$ and $\mathbf{v}_{f'}$ denote the vector representations of bifiltration functions $f$ and $f'$ respectively.  Let $d'_j$ denote the value of \gril{} corresponding to the worms at $\vp_j$ for the bifiltration function $f'$. Then, we can see that $\RKG^{M_{f'}}\left( \boxed{\vp}_{d'}^\ell \right) < k$ for $d' > d_j'$ and $\RKG^{M_{f'}}\left( \boxed{\vp}_{d'}^\ell \right) \geq k$ for $d' \leq d_j'$. This is because, $\sigma_{i}$ is moved in a small interval such that its relative order with respect to other simplices or the center point $\vp_j$ does not change. Now, let $d_j = d'_j - \eta$ where $\eta < \epsilon$. Then, by definition of $d'_j$ and $d_j$, we have $\vp_j^x - \sigma_{i}^{x} = \vp_j^x - (\sigma_{i}^{'x} + \eta)$. Thus, we have $\Lambda^{\vp_j}_{k,\ell}(\mathbf{v}_{f'}) - \Lambda^{\vp_j}_{k,\ell}(\mathbf{v}_f) = -(\sigma_{i}^{'x} - \sigma_{i}^{x})$ which gives us the formula $\frac{\partial \Lambda^{\vp_j}_{k,\ell}(\mathbf{v}_f)}{\partial \sigma_{i}^x} = -1$ if $\sigma_{i}$ is lower x-constraining. One can similarly argue about upper x-constraining and about y-constraining simplices. For $\boxed{\vp_j}_{d_j}^\ell$, only $\sigma_{i_j}$ is participating and no other simplex is and thus, the derivative $\frac{\partial \Lambda^{\vp_j}_{k,\ell}(\mathbf{v}_f)}{\partial \sigma_{t}^x} = 0$ for $t \neq i$.
\end{proof}


\section{More on Experiments}
\label{app:experiments}
\textbf{Benchmark graph datasets:} In Table \ref{tab:app:graph_data}, we provide information about benchmark graph datasets.

\begin{table}[!htb]
\centering
\begin{tabular}{@{}ccccc@{}}
\toprule
\textbf{Dataset}  & \textbf{Num Graphs} & \textbf{Num Classes} & \textbf{Avg. No. Nodes} & \textbf{Avg. No. Edges} \\ \midrule
\textsc {Proteins}& $1113$& $2$& $39.06$& $72.82$\\
\textsc{ Cox2}& $467$& $2$& $41.22$& $43.45$\\
\textsc{ Dhfr}& $756$& $2$& $42.43$& $44.54$\\
\textsc{ Mutag}& $188$& $2$& $17.93$& $19.79$\\ 
 \textsc{Imdb-Binary}& $1000$& $2$& $19.77$&$96.53$\\ \bottomrule
\end{tabular}
\vspace{0.2cm}
\caption{Description of Benchmark Graph Datasets}
    \label{tab:app:graph_data}
\end{table}



\textbf{Bio-activity Prediction Datasets:} The datasets are publicly available in ChEMBL website and can be downloaded following the tutorial mentioned in~\cite{TeachOpenCADD2022}. Details about the datasets are given in Table \ref{tab:dataset-details}. We convert the molecules to graphs with \textsc{Molfeat}. The initial $82$ dimensional node features that is fed to the GNN to get the input bifiltration function are (i) atom-one-hot, (ii) atom-degree-one-hot, (iii) atom-implicit-valence-one-hot, (iv) atom-hybridization-one-hot, (v) atom-is-aromatic, (vi) atom-formal-charge, (vii) atom-num-radical-electrons, (viii) atom-is-in-ring, (ix) atom-total-num-H-one-hot, (x) atom-chiral-tag-one-hot and (xi) atom-is-chiral-center. This is the default setting of \textsc{Molfeat} and we do not claim, in any way, that these are the optimal node features that are to be used.

\begin{table}[!htb]
\centering
\begin{tabular}{@{}cccccc@{}}
\toprule
\textbf{Dataset}       & \textbf{Num Graphs} & \textbf{Active} & \textbf{Inactive} & \textbf{Avg. Num Nodes} & \textbf{Avg. Num Edges} \\ \midrule
EGFR          & 4635       & 2631   & 2004     & 28.97          & 31.73          \\
ERRB2         & 1818       & 1140   & 678      & 33.33          & 36.70          \\
CHEMBL1163125 & 2719       & 1507   & 1212     & 30.77          & 34.23          \\
CHEMBL203     & 6816       & 4234   & 2582     & 31.88          & 34.98          \\
CHEMBL2148    & 3200       & 2380   & 820      & 29.89          & 33.28          \\
CHEMBL279     & 7461       & 5104   & 2357     & 31.82          & 35.11          \\
CHEMBL2815    & 3143       & 2484   & 659      & 33.72          & 37.30          \\
CHEMBL4005    & 4790       & 3195   & 1595     & 31.79          & 35.26          \\
CHEMBL4282    & 3004       & 2112   & 892      & 33.81          & 37.69          \\
CHEMBL4722    & 2565       & 1750   & 815      & 31.93          & 35.29          \\ \bottomrule
\end{tabular}
\vspace{0.2cm}
\caption{Details of the ChEMBL datasets. Note that compounds with $pIC_{50} >= 6.3$ are considered as active molecules.}
\label{tab:dataset-details}
\end{table}


\paragraph{Experimental Setup:} For the ChEMBL datasets, we used $1$ layer of GIN with a hidden dimension of $64$ to limit the information gained from message passing. The final $3$-layer MLP had hidden dimension of $32$. We ran the experiment for $50$ epochs with an initial learning rate set to be $1e^{-2}$, halved every $10$ epochs. For experiments with additional fingerprints (See Table~\ref{tab:results}), we used a total of $25$ epochs with an initial learning rate of $1e^{-4}$, halved every $10$ epochs. For all the experiments, we sample every $10$th point in the $100\times 100$ grid resulting in a total of $100$ center points.

\end{document}